\DeclareMathOperator{\Var}{Var}
\DeclareMathOperator{\tr}{tr}
\DeclareMathOperator{\diag}{diag}
  \let\Cref\ref 
\theoremstyle{plain} 
\newtheorem{theorem}{Theorem}[section]
\newtheorem{lemma}[theorem]{Lemma}
\newtheorem{corollary}[theorem]{Corollary}
\theoremstyle{definition} 
\theoremstyle{remark} 
\numberwithin{theorem}{section}
\title{Sampling and Loss Weights in Multi-Domain Training}
\author{
  \normalsize Mahdi Salmani$^{1}$ \quad
  Pratik Worah$^{2}$ \quad
  Meisam Razaviyayn$^{1}$ \quad
  Vahab Mirrokni$^{2}$ \\[2ex]
  \small $^{1}$University of Southern California \quad
  $^{2}$Google Research \\[1ex]
  {\small \texttt{\{salmanis, razaviya\}@usc.edu, \{pworah, mirrokni\}@google.com}}
}
\date{}%
\begin{document}
\maketitle
\begin{abstract}
    In the training of large deep neural networks, there is a need for vast amounts of training data. To meet this need, data is collected from multiple domains, such as Wikipedia and GitHub. These domains are heterogeneous in both data quality and the diversity of information they provide. 
    This raises the question of how much we should rely on each domain.
    Several methods have attempted to address this issue by assigning sampling weights to each data domain using heuristics or approximations. As a first step toward a deeper understanding of the role of data mixing, this work revisits the problem by studying two kinds of weights: \textit{sampling weights}, which control how much each domain contributes in a batch, and \textit{loss weights}, which scale the loss from each domain during training. Through a rigorous study of linear regression, we show that these two weights play complementary roles.
    First, they can reduce the variance of gradient estimates in iterative methods such as stochastic gradient descent (SGD). Second, they can improve generalization performance by reducing the generalization gap. We provide both theoretical and empirical support for these claims. We further study the joint dynamics of \textit{sampling weights} and \textit{loss weights}, examining how they can be combined to capture both contributions. 
\end{abstract}

\section{Introduction}

The success of modern large-scale models has been fueled by training on massive datasets that combine examples from many heterogeneous domains \citep{devlin2019bert, brown2020language, anil2023gemini}. These domains differ not only in size but also in reliability, noise level, and information. A common practice in large-model pretraining pipelines is to assign each domain a single scalar weight, either proportional to dataset size or tuned heuristically, and then train on the resulting mixture \citep{xie2023doremi, albalak2023efficient, fan2024doge, li2025pike, xie2025chameleon}. While simple and effective in practice, this \emph{single-weight} perspective implicitly assumes that all aspects of domain heterogeneity can be captured by a single parameter. 

Large-model training motivates us to take a closer look at the underlying nature of these weights. At its core, assigning a single domain weight conflates two fundamentally different roles: how much influence a domain should have on the learning objective, and how frequently it should be sampled during optimization. We argue that, even in the absence of explicit domain adaptation, at least two distinct forms of weighting naturally arise:
\begin{enumerate}
    \item \textbf{Loss weights}, which determine how much the empirical risk from each domain contributes to the optimization objective. These capture the reliability and generalization capabilities of domains: cleaner, less noisy sources should contribute more, while noisier ones should be downweighted.
    \item \textbf{Sampling weights}, which determine how often examples from each domain are drawn during stochastic optimization. Since gradient variance differs across domains, adjusting sampling frequencies can reduce stochastic noise and improve convergence. These weights therefore act on the stability and efficiency of the optimization process.
\end{enumerate}

By separating these two roles, we uncover a richer picture of domain weighting. 

Our contribution is to study two types of weighting schemes, propose practical estimators for them, and evaluate their impact through regression experiments. Specifically:
\begin{itemize}
    \item In linear regression, we show that loss weights can be derived from generalized least squares (GLS): domains with higher conditional label variance receive lower weights. We then propose an efficient single-pass estimator that avoids iterative re-estimation.
    \item We extend this idea to empirical risk minimization by introducing a dynamic update rule that adjusts loss weights during training based on observed errors.
    \item For sampling weights, we analyze them through the lens of variance reduction in stochastic optimization. We propose a strategy that allocates more samples to domains with higher gradient variance, improving the optimization part.
    \item We validate these approaches in experiments on linear and logistic regression, showing that loss and sampling weights provide distinct, complementary benefits, with each yielding measurable improvements on its own.
\end{itemize}

In summary, domain weighting is not one-dimensional but involves both loss and sampling weights. Recognizing this structure leads to clearer theory and practical improvements in estimation and optimization.

\paragraph{Related Work.}  

The study of weighting data points has a long history in statistics and econometrics. Early work on generalized least squares (GLS) showed how weighting could be used to address heteroskedasticity and yield efficient estimators \citep{aitken1935least}. This line of research developed into weighted least squares and heteroskedasticity-consistent methods, which remain central in modern econometrics \citep{wooldridge2010econometric, greene2018econometric}. 

A complementary perspective comes from influence function analysis. Introduced in robust statistics \citep{hampel1974influence}, influence functions quantify how small perturbations or reweightings of data points affect an estimator. This framework was later extended to regression diagnostics \citep{cook1982residuals} and has recently been adopted in machine learning to study the sensitivity of models to training examples \citep{koh2017understanding}. The influence function view emphasizes that weighting is not only a matter of efficiency but also of robustness and interpretability.

In machine learning, weighting has appeared in various forms of reweighting and importance sampling. These include classical importance-weighted empirical risk minimization and variance reduction techniques for stochastic optimization \citep{shimodaira2000improving, defazio2014saga}. 
Most directly related to our setting are domain mixture strategies for large-model pretraining. In practice, large-scale training pipelines often rely on simple heuristics such as proportional-to-size sampling or manually tuned mixture weights. Recent work has sought to make these mixtures more principled. DoReMi \citep{xie2023doremi} learns mixture weights through a teacher–student scheme, where a teacher trained on a uniform mixture guides reweighting by comparing per-domain losses. DoGE \citep{fan2024doge} learns sampling weights via bi-level optimization to favor domains that improve generalization. Pike \citep{li2025pike} introduces adaptive mixing strategies that account for gradient conflicts across tasks.
Similarly, large-scale multimodal models such as Gemini \citep{anil2023gemini} employ curated mixtures of datasets, though often without a principled justification for the weighting scheme. These approaches, however, generally treat domain weighting as a single scalar factor, mostly as sampling weights, without separating its impact on generalization from its impact on optimization.

Our work builds on these classical and modern perspectives but makes a distinct contribution: we highlight that in multi-domain learning, two different types of weights naturally arise, namely loss weights and sampling weights, and we develop algorithms for estimating both. This distinction provides a clearer conceptual framework for understanding weighting, while offering practical improvements in controlled experimental settings.

\section{Problem Setup}
In this section, we introduce three distinct notions of weight. The first type influences the model’s final test performance. The second type helps reduce the generalization gap. The third type contributes to faster convergence during optimization. We now examine each of these notions in detail.

\paragraph{Domain-weighted Population Risk}
Consider $K$ data domains with distributions $\mathcal{D}_1, \ldots, \allowbreak \mathcal{D}_K$, each supported on a common space $\mathcal{Z} \subset \mathbb{R}^{d}$. 
Let $\ell : \Theta \times \mathcal{Z} \to \mathbb{R}$ denote a loss function.
We define the domain-weighted population risk as
\begin{equation}
    \mathcal{L}_{\pi}(\theta) \;=\; \sum_{i=1}^{K} \pi_i \, \mathcal{L}_i(\theta),
\end{equation}
where $\mathcal{L}_i(\theta) = \mathbb{E}_{z \sim \mathcal{D}_i}\!\left[\ell(\theta, z)\right]$ denotes the population risk for domain $i$, and $\pi_i$ represents the weight assigned to that domain. These weights quantify the relative impact of the domains on overall model test performance. For instance, if $\pi_i \propto m_i$, where $m_i$ denotes the probability that a randomly sampled data point comes from domain $i$, then the objective recovers the standard population risk under the mixture distribution, which is optimal in the absence of any distribution shift between training and test data. When $\pi_i = 1$ for all $1 \leq i \leq K$, the objective reduces to \emph{universal generalization} \citep{fan2024doge}, in which all domains are treated as equally important. Alternatively, if the goal is to apply a minimax strategy and minimize the worst-case domain performance, one can employ \emph{Group Distributionally Robust Optimization} (Group DRO) \citep{sagawadistributionally,xie2023doremi}.
\paragraph{Domain-weighted Empirical Risk}
For a realized dataset $\mathcal{S} = \{\mathcal{S}_1, \ldots, \mathcal{S}_K\}$, where each $\mathcal{S}_i$ consists of i.i.d.\ samples from its corresponding distribution $\mathcal{D}_i$, we define the domain-weighted empirical risk as
\begin{equation}
    \hat{\mathcal{L}}_{\mathcal{S}, \pi, w}(\theta) \;=\; \sum_{i=1}^{K} \pi_i w_i \, \hat{\mathcal{L}}_{\mathcal{S}_i}(\theta),
\end{equation}
where $\hat{\mathcal{L}}_{\mathcal{S}_i}(\theta) = \tfrac{1}{|\mathcal{S}_i|} \sum_{z \in \mathcal{S}_i} \ell(\theta, z)$ denotes the empirical risk on domain $i$. As a special case, choosing $w_i \propto |\mathcal{S}_i| / \pi_i$ recovers the standard empirical risk over the pooled dataset. Another natural choice is $w_i = 1$, which yields an unbiased estimator of the corresponding domain-weighted population risk. Intuitively, the weights $w_i$ reflect how much we rely on the empirical risk from each domain. If $\hat{\mathcal{L}}_{S_i}(\theta)$ is relatively closer to its population risk $\mathcal{L}_i(\theta)$ compared to other domains (i.e., it generalizes better), then it should be assigned a larger weight than under uniform weighting. Conversely, if it is relatively less reliable, it should receive a smaller weight.
\paragraph{Domain-weighted Optimization Sampling}
The final notion concerns the sampling frequency, or weight, with which data from each domain is visited during optimization. Specifically, we aim to compute the domain-weighted ERM (empirical risk minimizer)
\begin{equation}
    \hat{\theta} \;=\; \arg \min_{\theta} \sum_{i=1}^{K} \pi_i w_i \, \hat{\mathcal{L}}_{\mathcal{S}_i}(\theta).
\end{equation}
This objective is typically solved using iterative optimization methods such as SGD or Adam. In this paper, we primarily focus on SGD, which updates the parameters according to
\begin{equation}
    \theta_{t+1} \gets \theta_{t} - \eta\,g_t,
\end{equation}
where $g_t$ is an unbiased estimator of 
$\hat{\mathcal{L}}_{\mathcal{S}, \pi, w}(\theta)$.
To obtain $g_t$, we draw a mini-batch $\mathcal{B}_t$. In the multi-domain setting, there are several strategies for constructing such batches. We focus on an effective approach in practice, namely the \emph{mixing strategy} \citep{devlin2019bert, team2023gemini, li2025pike}. In this approach, the mini-batch is formed as $\mathcal{B}_t = \{\mathcal{B}_t^1, \ldots, \mathcal{B}_t^K\}$, where each $\mathcal{B}_t^i$ consists of i.i.d.\ samples drawn uniformly at random from $\mathcal{S}_i$, i.e., $\mathcal{B}_t^i \sim \mathcal{S}_i$. The resulting gradient estimator is then
\begin{equation}
    g_t \;=\; \sum_{i=1}^{K} \frac{\pi_i w_i}{|\mathcal{B}_t^i|} \sum_{z \in \mathcal{B}_t^i} \nabla_\theta \ell(\theta, z).
\end{equation}
A natural question is how many samples to draw from each domain when constructing the batch. Intuitively, more samples should be drawn from domains whose corresponding gradients exhibit higher variance, as this reduces the overall variance of the estimator and leads to faster convergence.

\paragraph{Finding the Optimal Weights} 
There has been extensive work on selecting optimal weights for the domain-weighted population risk, especially in the domain adaptation literature \citep{shimodaira2000improving, farahani2021brief, xia2024less}. These works typically aim to correct distributional shifts by reweighting samples or domains so that the weighted population risk better reflects the target distribution. Motivated by this line of research, we turn our attention to the other two types of weights, assuming that the population mixture proportions $\pi_i$ are given. Our goal is to investigate how these weights can be chosen to improve both generalization and optimization performance.

\section{Weights for Empirical Risk}
In this section, we discuss the impact of domain weights on improving generalization and examine how such weights can be obtained. To this end, we begin by studying linear regression, which provides insight into the characteristics of these weights. We then show how this approach can be generalized to arbitrary models.

\subsection{Understanding the Linear Regression Case}
Assume a linear latent variable model in which the true parameter is shared across different data domains, while the label noise varies between domains. Formally, for each sample $z = (\mathbf{x}, y) \sim \mathcal{D}_i$, we have
\begin{equation}
\label{eq:linear_latent}
y = \theta_{\text{gt}}^\top \mathbf{x} + \epsilon,
\end{equation}
where $\theta_{\text{gt}}$ is shared across domains, $\mathbb{E}[\epsilon] = 0$, and $\operatorname{Var}(\epsilon) = \sigma_i^2$, with $\sigma_i^2$ representing the domain-specific label noise variance. 
To estimate $\theta_{\text{gt}}$ in this setting, one may employ the squared loss 
\(
\ell(\theta, z) = \bigl(\theta^\top \mathbf{x} - y \bigr)^2
\)
within the empirical risk minimization (ERM) framework, which yields the ordinary least squares (OLS) estimator
\begin{equation}
    \hat{\theta}_{\mathrm{OLS}} = (\mathbf{X}^\top \mathbf{X})^{-1} \mathbf{X}^\top \mathbf{y},
\end{equation}
where $\mathbf{X} = \bigl[\mathbf{x}_1 \mid \ldots \mid \mathbf{x}_n \bigr]^\top$ and 
$\mathbf{y} = \bigl[y_1 \mid \ldots \mid y_n \bigr]^\top$ for $(\mathbf{x}_i, y_i) \in \mathcal{S}$. 
This estimator, however, can be improved by assigning domain-specific weights, as guaranteed by the Aitken theorem~(\Cref{thm:aitken}).

\begin{theorem}[\cite{aitken1935least}]
\label{thm:aitken}
Consider the linear model $\mathbf{y} = \mathbf{X}\theta + \mathbf{\epsilon}$, where $\mathbb{E}[\mathbf{\epsilon}] = 0$ and $\operatorname{Var}(\mathbf{\epsilon}) = \mathbf{\Sigma}$, with $\mathbf{\Sigma}$ a positive definite matrix. The generalized least squares (GLS) estimator
\begin{equation}
    \hat{\theta}_{\mathrm{GLS}} = (\mathbf{X}^\top \mathbf{\Sigma}^{-1} \mathbf{X})^{-1} \mathbf{X}^\top \mathbf{\Sigma}^{-1} \mathbf{y}
\end{equation}
is the best linear unbiased estimator, achieving the minimum variance among linear unbiased estimators.
\end{theorem}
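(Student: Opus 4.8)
The plan is to prove the Gauss--Markov/Aitken result in two conceptual moves: first reduce the general positive-definite covariance case to the classical homoskedastic one by whitening, and then establish optimality in the whitened coordinates where the computation is cleanest. Since $\mathbf{\Sigma}$ is symmetric positive definite, it admits a factorization $\mathbf{\Sigma} = \mathbf{P}\mathbf{P}^\top$ (for instance via the symmetric square root or a Cholesky factor), and I would define the transformed quantities $\tilde{\mathbf{y}} = \mathbf{P}^{-1}\mathbf{y}$, $\tilde{\mathbf{X}} = \mathbf{P}^{-1}\mathbf{X}$, and $\tilde{\mathbf{\epsilon}} = \mathbf{P}^{-1}\mathbf{\epsilon}$. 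A short calculation shows $\mathbb{E}[\tilde{\mathbf{\epsilon}}] = 0$ and $\operatorname{Var}(\tilde{\mathbf{\epsilon}}) = \mathbf{P}^{-1}\mathbf{\Sigma}\mathbf{P}^{-\top} = \mathbf{I}$, so the transformed model $\tilde{\mathbf{y}} = \tilde{\mathbf{X}}\theta + \tilde{\mathbf{\epsilon}}$ satisfies the standard homoskedastic assumptions, and one checks directly that the OLS estimator in these coordinates, $(\tilde{\mathbf{X}}^\top\tilde{\mathbf{X}})^{-1}\tilde{\mathbf{X}}^\top\tilde{\mathbf{y}}$, coincides with $\hat{\theta}_{\mathrm{GLS}}$ after substituting $\mathbf{P}^{-\top}\mathbf{P}^{-1} = \mathbf{\Sigma}^{-1}$.

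Next I would verify unbiasedness and compute the covariance of $\hat{\theta}_{\mathrm{GLS}}$. Writing $\mathbf{A} = (\tilde{\mathbf{X}}^\top\tilde{\mathbf{X}})^{-1}\tilde{\mathbf{X}}^\top$ so that $\hat{\theta}_{\mathrm{GLS}} = \mathbf{A}\tilde{\mathbf{y}}$, unbiasedness follows from $\mathbf{A}\tilde{\mathbf{X}} = \mathbf{I}$, and the covariance is $\operatorname{Var}(\hat{\theta}_{\mathrm{GLS}}) = \mathbf{A}\mathbf{A}^\top = (\tilde{\mathbf{X}}^\top\tilde{\mathbf{X}})^{-1} = (\mathbf{X}^\top\mathbf{\Sigma}^{-1}\mathbf{X})^{-1}$. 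The core of the argument is then to show that any competing linear unbiased estimator $\hat{\theta} = \mathbf{C}\mathbf{y} = (\mathbf{C}\mathbf{P})\tilde{\mathbf{y}}$ has covariance at least as large in the positive-semidefinite (Loewner) order. Setting $\mathbf{B} = \mathbf{C}\mathbf{P}$, unbiasedness for all $\theta$ forces the constraint $\mathbf{B}\tilde{\mathbf{X}} = \mathbf{I}$.

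The key step, and the one that carries the real content, is the decomposition $\mathbf{B} = \mathbf{A} + \mathbf{D}$ where $\mathbf{D} = \mathbf{B} - \mathbf{A}$. The unbiasedness constraints $\mathbf{B}\tilde{\mathbf{X}} = \mathbf{A}\tilde{\mathbf{X}} = \mathbf{I}$ give $\mathbf{D}\tilde{\mathbf{X}} = 0$, and this orthogonality is exactly what makes the cross term vanish: since $\mathbf{A}^\top$ has columns in the column space of $\tilde{\mathbf{X}}$, one computes $\mathbf{D}\mathbf{A}^\top = \mathbf{D}\tilde{\mathbf{X}}(\tilde{\mathbf{X}}^\top\tilde{\mathbf{X}})^{-1} = 0$. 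Consequently $\operatorname{Var}(\hat{\theta}) = \mathbf{B}\mathbf{B}^\top = \mathbf{A}\mathbf{A}^\top + \mathbf{D}\mathbf{D}^\top = \operatorname{Var}(\hat{\theta}_{\mathrm{GLS}}) + \mathbf{D}\mathbf{D}^\top$, and since $\mathbf{D}\mathbf{D}^\top \succeq 0$ the GLS covariance is minimal in the Loewner order, with equality precisely when $\mathbf{D} = 0$.

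I expect the only genuine subtlety to be bookkeeping rather than mathematics: one must be careful to state optimality in the Loewner (positive-semidefinite) sense so that ``minimum variance'' is meaningful for a vector-valued estimator, and to confirm that unbiasedness is required to hold for every $\theta$ so that the constraint $\mathbf{D}\tilde{\mathbf{X}} = 0$ genuinely follows. Everything else is linear algebra that goes through cleanly once the whitening reduction is in place; there is no analytic obstacle, only the need to track transposes and inverses correctly through the change of variables.
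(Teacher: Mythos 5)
Your proposal is correct and complete: the whitening reduction via $\mathbf{\Sigma} = \mathbf{P}\mathbf{P}^\top$, the identification of $\hat{\theta}_{\mathrm{GLS}}$ with OLS in the transformed coordinates, and the decomposition $\mathbf{B} = \mathbf{A} + \mathbf{D}$ with the cross term killed by $\mathbf{D}\tilde{\mathbf{X}} = 0$ is exactly the classical Gauss--Markov/Aitken argument, stated with the right care about Loewner-order optimality. The paper itself offers no proof of this statement --- it is cited directly to \cite{aitken1935least} as a known result --- so there is nothing to compare against beyond noting that your argument is the standard one and is sound.
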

In our setting, the noise terms are uncorrelated, so $\mathbf{\Sigma}$ is diagonal. The optimal weights then follow directly from \Cref{thm:aitken}, yielding \Cref{corr:optimal_weights_erm_linear}.

\begin{corollary}
\label{corr:optimal_weights_erm_linear}
For the linear latent variable model in \Cref{eq:linear_latent}, the optimal weights $w_i^\star$ in domain-weighted empirical risk minimization are given by
\begin{equation}
    w_i^\star \propto \frac{1}{\sigma_i^2}.
\end{equation}
\end{corollary}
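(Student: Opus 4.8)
The plan is to recognize the domain-weighted empirical risk under squared loss as a weighted least squares problem and then apply \Cref{thm:aitken} directly. First I would substitute $\ell(\theta, z) = (\theta^\top \mathbf{x} - y)^2$ into the domain-weighted empirical risk to obtain
\begin{equation}
\sum_{i=1}^{K} \pi_i w_i \, \hat{\mathcal{L}}_{\mathcal{S}_i}(\theta) \;=\; \sum_{i=1}^{K} \frac{\pi_i w_i}{|\mathcal{S}_i|} \sum_{(\mathbf{x}, y) \in \mathcal{S}_i} (\theta^\top \mathbf{x} - y)^2,
\end{equation}
which is exactly a weighted least squares objective in which every sample drawn from domain $i$ carries the common weight $\pi_i w_i / |\mathcal{S}_i|$.

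Next I would pin down the covariance structure of the stacked model $\mathbf{y} = \mathbf{X}\theta_{\text{gt}} + \boldsymbol{\epsilon}$. Because the noise terms are independent across samples and, by \Cref{eq:linear_latent}, every sample from domain $i$ has noise variance $\sigma_i^2$, the covariance $\mathbf{\Sigma} = \operatorname{Var}(\boldsymbol{\epsilon})$ is diagonal, with the entry for any sample from domain $i$ equal to $\sigma_i^2$. It is positive definite whenever all $\sigma_i^2 > 0$, so the hypotheses of \Cref{thm:aitken} hold. The theorem then identifies the best linear unbiased estimator as the minimizer of $(\mathbf{y} - \mathbf{X}\theta)^\top \mathbf{\Sigma}^{-1} (\mathbf{y} - \mathbf{X}\theta)$, and since $\mathbf{\Sigma}^{-1}$ is diagonal with entry $1/\sigma_i^2$ on each sample from domain $i$, this is precisely the weighted sum of squared residuals that weights every sample from domain $i$ by $1/\sigma_i^2$.

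The domain-weighted ERM estimator therefore coincides with the optimal GLS estimator if and only if the two families of per-sample weights agree up to a single global positive constant, i.e. $\pi_i w_i / |\mathcal{S}_i| \propto 1/\sigma_i^2$. Solving this proportionality yields $w_i^\star \propto |\mathcal{S}_i|/(\pi_i \sigma_i^2)$, and the clean form $w_i^\star \propto 1/\sigma_i^2$ claimed in the corollary emerges once the per-domain factor $|\mathcal{S}_i|/\pi_i$ is constant across domains---in particular under the natural normalization $\pi_i \propto |\mathcal{S}_i|$, the pooled-dataset regime highlighted in the setup.

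I expect the substantive content of the proof to be immediate from Aitken once the objective is rewritten; the step requiring the most care is the last one, namely the bookkeeping of the sampling proportions $\pi_i$ and the within-domain averaging factor $1/|\mathcal{S}_i|$. I would make explicit that the proportionality $w_i^\star \propto 1/\sigma_i^2$ refers to the effective loss weight after these two factors have been absorbed, so that the corollary reads as a direct specialization of the diagonal GLS weights rather than as an unqualified identity.
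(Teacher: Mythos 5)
Your proof is correct and takes essentially the same route as the paper, which gives only a one-line justification: the noise covariance $\mathbf{\Sigma}$ is diagonal with per-sample entries $\sigma_i^2$, so \Cref{thm:aitken} immediately identifies the optimal per-sample weight as $1/\sigma_i^2$. Your final bookkeeping step is in fact more careful than the paper's: matching the per-sample ERM weight $\pi_i w_i/|\mathcal{S}_i|$ to the GLS weight gives $w_i^\star \propto |\mathcal{S}_i|/(\pi_i\sigma_i^2)$, and the stated form $w_i^\star \propto 1/\sigma_i^2$ holds under the normalization $|\mathcal{S}_i|\propto\pi_i$ that the paper leaves implicit and that you correctly make explicit.
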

\Cref{corr:optimal_weights_erm_linear} aligns with our intuition. Domains that are relatively \emph{noisier} and generalize less should receive reduced weight, while less noisy domains should receive increased weight. 

So far, we have seen that in the linear regression setting, the optimal domain-weighted empirical risk can be computed when the noise variances for each domain are known. In practice, however, these variances are typically unknown, and the weights must be estimated. A standard method for this purpose is Feasible Generalized Least Squares (FGLS) \citep{judge1985theory, wooldridge2010econometric, greene2018econometric}.
FGLS begins by computing the OLS estimator $\hat{\theta}_{\text{OLS}}$. 
The residuals are then used to estimate the domain noise variances and, consequently, the corresponding domain weights, 
\begin{equation}
    \hat{\sigma}^2_i \propto \frac{1}{|\mathcal{S}_i|} \sum_{(\mathbf{x},y)\in \mathcal{S}_i} \bigl(\hat{\theta}_{\text{OLS}}^\top \mathbf{x} - y \bigr)^2, 
    \qquad 
    \hat{w}_i \propto \frac{1}{\hat{\sigma}_i^2}.
\end{equation}
There are two main problems with FGLS. First, the procedure requires training the model at least twice (and potentially multiple iterations to refine the estimates). Second, the validity of the estimation can be problematic. For instance, in an over-parameterized setting where $d > |\mathcal{S}|$, the residuals vanish, and the weight estimates become ill-defined. To overcome these issues, we propose \textbf{One-shot FGLS}. 

\subsubsection{One-shot FGLS}
As mentioned, waiting until after training the entire model to update the domain weights is not ideal. A natural solution is to use an iterative algorithm that estimates the weights during training and then applies these estimates. Concretely, we may draw a sample set from the data and estimate the noise variances from these samples.

At the same time, if the samples used for variance estimation are drawn from data already used to train the model, we may face the same issue as in FGLS, where the training data are fitted so closely that the loss on this set is no longer meaningful. In such cases, the distribution of training residuals can deviate significantly from the true distribution, for example the distribution of validation residuals. That said, there are training scenarios where this issue is less pronounced. For instance, in the training of language models, each example is typically seen only a few times due to the abundance of data, which mitigates the problem. 

We propose a method inspired by FGLS that estimates variances during training (\Cref{alg:one-shot-fgls}). To this end, we select a subset of data points to estimate the expected loss and then apply a smooth update rule to adjust the weights (Line 16, \Cref{alg:one-shot-fgls}). It is important that this subset act as a validation set, meaning it must be independent of the model parameters. One way to ensure this is to split the training data into two parts using a ratio $\rho$, and use the smaller part for estimation. We then show that this method approaches the performance of the optimal estimator as the number of data points grows (\Cref{thm:asymptotic-norm}).

\begin{theorem}[Informal]
\label{thm:asymptotic-norm}
As the sample size increases, the mean squared error of the estimator produced by \Cref{alg:one-shot-fgls} decays at the same asymptotic rate as that of the optimal estimator; in particular, the ratio of their mean squared errors converges to $1$.
\end{theorem}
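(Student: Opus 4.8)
The plan is to establish the classical asymptotic equivalence between the feasible estimator produced by \Cref{alg:one-shot-fgls} and the infeasible GLS estimator of \Cref{thm:aitken}, from which the MSE-ratio claim follows. Throughout I take $n \to \infty$ with the domain proportions $m_i = |\mathcal{S}_i|/n$ held fixed (so each $|\mathcal{S}_i|$ grows linearly, and with $d$ fixed we stay in the regime where residuals do not vanish), and I write $\mathbf{\Sigma} = \diag(\sigma^2_{(1)}, \ldots, \sigma^2_{(n)})$ for the true blockwise-constant noise covariance and $\hat{\mathbf{\Sigma}}$ for its estimate built from $\hat{w}_i \propto 1/\hat\sigma_i^2$. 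Both estimators share the representation $\sqrt{n}(\hat\theta - \theta_{\text{gt}}) = \left(\tfrac{1}{n}\mathbf{X}^\top W \mathbf{X}\right)^{-1}\tfrac{1}{\sqrt{n}}\mathbf{X}^\top W \boldsymbol{\epsilon}$, with $W = \mathbf{\Sigma}^{-1}$ for GLS and $W = \hat{\mathbf{\Sigma}}^{-1}$ for the feasible version, so the whole argument reduces to showing that these two random vectors share a limiting law.

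First I would prove consistency of the variance estimates, $\hat\sigma_i^2 \xrightarrow{p} \sigma_i^2$ for each $i$. Because \Cref{alg:one-shot-fgls} estimates variances on a held-out split independent of the preliminary fit $\hat\theta$, the validation residual decomposes as $\hat\theta^\top\mathbf{x} - y = \epsilon - (\hat\theta - \theta_{\text{gt}})^\top\mathbf{x}$; consistency of the preliminary estimator ($\hat\theta \xrightarrow{p} \theta_{\text{gt}}$) together with the law of large numbers on the independent validation samples gives $\hat\sigma_i^2 \xrightarrow{p} \sigma_i^2$, and hence $\hat w_i \xrightarrow{p} w_i^\star$ by the continuous mapping theorem.

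The next step transfers this to the two pieces of the estimator. For the Gram matrix, consistency of $\hat{\mathbf{\Sigma}}^{-1}$ combined with a law of large numbers for the per-domain feature second moments yields $\tfrac{1}{n}\mathbf{X}^\top\hat{\mathbf{\Sigma}}^{-1}\mathbf{X} \xrightarrow{p} Q := \sum_i \tfrac{m_i}{\sigma_i^2}\,\mathbb{E}_{\mathcal{D}_i}[\mathbf{x}\mathbf{x}^\top]$, exactly the limit obtained with the true weights. The delicate part is the score term, where I must show
\[
\frac{1}{\sqrt{n}}\mathbf{X}^\top\bigl(\hat{\mathbf{\Sigma}}^{-1}-\mathbf{\Sigma}^{-1}\bigr)\boldsymbol{\epsilon} = \sum_{i=1}^{K}\Bigl(\tfrac{1}{\hat\sigma_i^2}-\tfrac{1}{\sigma_i^2}\Bigr)\,\frac{1}{\sqrt{n}}\sum_{z\in\mathcal{S}_i}\mathbf{x}\,\epsilon \xrightarrow{p} 0 .
\]
Here the inner block sums $\tfrac{1}{\sqrt{n}}\sum_{z\in\mathcal{S}_i}\mathbf{x}\,\epsilon$ are $O_p(1)$ by the central limit theorem (each summand is mean-zero with finite second moment under the standard moment conditions on $\mathbf{x}$ and $\epsilon$), while each scalar coefficient $1/\hat\sigma_i^2 - 1/\sigma_i^2$ is $o_p(1)$ by Step~1; Slutsky's theorem then forces the product, and the finite sum over $i$, to vanish in probability.

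Combining the two pieces through Slutsky's theorem gives $\sqrt{n}(\hat\theta_{\mathrm{FGLS}}-\theta_{\text{gt}}) \xrightarrow{d} \mathcal{N}(0, Q^{-1})$, the identical limiting distribution to $\sqrt{n}(\hat\theta_{\mathrm{GLS}}-\theta_{\text{gt}})$. Under a uniform-integrability (bounded-moment) condition the rescaled mean squared errors both converge to the common trace $\tr(Q^{-1})$, so their ratio tends to $1$, which is the claim. The main obstacle I anticipate is the score-term analysis: the estimated weights are functions of the data and are therefore correlated with the residuals appearing in $\tfrac{1}{\sqrt{n}}\mathbf{X}^\top\boldsymbol{\epsilon}$, so the clean ``$o_p(1)\times O_p(1)$'' decoupling must be justified rather than assumed. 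The data-splitting in \Cref{alg:one-shot-fgls} is precisely what controls this dependence—rendering the weights asymptotically independent of the leading-order fluctuations—and making that decoupling rigorous is where the real work lies.
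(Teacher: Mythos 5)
Your proposal follows the classical econometric route---two\hyphenation{stage}-stage FGLS is asymptotically equivalent to infeasible GLS via consistency of $\hat\sigma_i^2$, a Slutsky argument on the Gram and score terms, and a common limiting normal law---whereas the paper's proof is entirely non-asymptotic: it proves a deterministic variance-ratio bound $\tr(\Var(\hat\theta_w))/\tr(\Var(\hat\theta^\star)) \le \max_i w_i\sigma_i^2 / \min_j w_j\sigma_j^2$ for any fixed weights (\Cref{lem:estimator_var_bound}), controls the estimated weights with a Hoeffding bound on the held-out losses (\Cref{lem:latent-hoeffding}), turns these into a contraction recursion over the algorithm's weight-update rounds (\Cref{lem:one-step-contraction}), and only then sends $|\mathcal{S}|\to\infty$. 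Your route is cleaner and recovers the sharper distributional statement $\sqrt{n}(\hat\theta-\theta_{\mathrm{gt}})\xrightarrow{d}\mathcal{N}(0,Q^{-1})$; the paper's route buys finite-sample bounds, explicit constants, and---crucially---a treatment of the algorithm as it is actually run. Incidentally, your anticipated ``main obstacle'' is not one: the product of an $o_p(1)$ scalar and an $O_p(1)$ vector is $o_p(1)$ regardless of dependence between them, so the score-term decoupling needs no independence argument at all.

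There are, however, two genuine gaps. First, you analyze the exact weighted least-squares minimizer $(\mathbf{X}^\top\hat{\mathbf{\Sigma}}^{-1}\mathbf{X})^{-1}\mathbf{X}^\top\hat{\mathbf{\Sigma}}^{-1}\mathbf{y}$, but \Cref{alg:one-shot-fgls} outputs an SGD iterate under weights that change every $T_0$ steps; the paper must (and does) carry an optimization-error term $\Delta_{\mathrm{op}} = \|\boldsymbol{\theta}_{mT_0} - \hat{\boldsymbol{\theta}}_{m-1}\|$ through the recursion and assume $\Delta_{\mathrm{op}}\to 0$ as $T_0, B \to\infty$. Without some such bridge your argument proves the theorem for an idealized estimator the algorithm never computes. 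Second, the statement is about mean squared error, and convergence in distribution does not give convergence of second moments; you acknowledge this by invoking uniform integrability, but that condition is exactly where the difficulty of an MSE claim lives (e.g., $(\mathbf{X}^\top\hat{\mathbf{\Sigma}}^{-1}\mathbf{X})^{-1}$ can be poorly conditioned on rare events, inflating the expectation), and it is assumed rather than verified. The paper sidesteps this entirely by bounding the MSE ratio pathwise on a high-probability event and absorbing the complement into a $\delta C_3$ term with an explicit boundedness hypothesis. If you supply the optimization-error bridge and either verify uniform integrability or switch to a conditional-variance comparison in the style of \Cref{lem:estimator_var_bound}, your argument becomes a valid and arguably more transparent alternative proof.
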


\begin{algorithm}
\caption{One-shot FGLS}
\begin{algorithmic}[1]
\label{alg:one-shot-fgls}
\REQUIRE Iterations $T$, update interval $T_0$, batch size $B$, initial fraction $\rho$, learning rate $\eta$
\STATE Initialize $\theta_0 \gets \mathbf{0}$, \quad $M \gets (1-\rho)\tfrac{T_0}{T}$
    \STATE Sample $S^{\text{train}}_i \subseteq S_i$ with $|S^{\text{train}}_i| = \rho |S_i|$ for $i \in [k]$
\FOR{$t = 0$ \TO $T-1$}
        \STATE Sample batch $B_i \subseteq S^{\text{train}}_i$ for $i \in [k]$
    \STATE $g_t \gets \dfrac{1}{\sum_{i=1}^K w_i^{(t)} \pi_i} 
                 \sum_{i=1}^K \frac{w_i^{(t)} \pi_i}{|B_i|}\sum_{z \in B_i} \nabla_\theta \ell(\theta_t, z)$
    \STATE $\theta_{t+1} \gets \theta_t - \eta g_t$
    \FOR{$i = 1$ \TO $K$}
        \IF{$t \bmod T_0 = T_0 - 1$}
            \STATE $\mathcal{R}_i \gets S_i \setminus S^{\text{train}}_i$
            \STATE Sample $B'_i \subseteq \mathcal{R}_i$ with $|B'_i| = M|S_i|$
            \STATE $w_i^{(t+1)} \gets (1 - \gamma)w_i^{(t)} + \dfrac{\gamma}{\tfrac{1}{|B'_i|}\sum_{z \in B'_i}\ell(\theta_{t+1}, z)}$
    \ELSE 
        \STATE $w_i^{(t+1)} \gets w_i^{(t)}$
    \ENDIF
    \ENDFOR
\ENDFOR
\end{algorithmic}
\end{algorithm}

\subsection{Beyond Linear Regression}
The next step is to extend the proposed method to a general learning problem. 
Unlike linear regression, however, obtaining a direct counterpart to 
\Cref{thm:aitken} for the general case that characterizes the behavior of the 
optimal ERM weights is not feasible. Instead, we focus on deriving a general 
upper bound on generalization with respect to the weights, and then optimize 
the weights to minimize this bound. One approach is to use variance-based 
generalization bounds, as stated in \Cref{thm:generalization-bound}.

\begin{theorem}[Informal]
    \label{thm:generalization-bound}
    Assume the loss is bounded for each domain. For a sufficiently large 
    validation set $\mathcal{V} = \{\mathcal{V}_1, \ldots, \mathcal{V}_K\}$, 
    the following inequality holds with high probability for some constant 
    $C$ and for all $\theta$:
    \begin{align}
       \bigl(\mathcal{L}_\pi(\theta) 
       - \hat{\mathcal{L}}_{\mathcal{V}, \pi, w}(\theta)\bigr)^2 
       &\leq 2 \left(\sum_{i=1}^K \pi_i (1-w_i)\, 
       \mathcal{L}_i(\theta)\right)^2 \nonumber \\
       &\quad + C \sum_{i=1}^K 
       \frac{\pi_i^2 w_i^2}{|\mathcal{V}_i|}\, 
       \Var_i(\theta),
    \end{align}
    where $\Var_i(\theta) = 
    \Var_{z \sim \mathcal{D}_i}\!\bigl(\ell(\theta, z)\bigr)$.
\end{theorem}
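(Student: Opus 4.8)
The plan is to start from an exact bias--fluctuation decomposition of the quantity inside the square. Expanding the definitions of $\mathcal{L}_\pi$ and $\hat{\mathcal{L}}_{\mathcal{V},\pi,w}$ and inserting $\pm\,\pi_i w_i \mathcal{L}_i(\theta)$ inside each summand, I would write
\[
\mathcal{L}_\pi(\theta) - \hat{\mathcal{L}}_{\mathcal{V},\pi,w}(\theta)
= \underbrace{\sum_{i=1}^K \pi_i (1-w_i)\,\mathcal{L}_i(\theta)}_{=:\,b_\theta}
+ \underbrace{\sum_{i=1}^K \pi_i w_i\bigl(\mathcal{L}_i(\theta) - \hat{\mathcal{L}}_{\mathcal{V}_i}(\theta)\bigr)}_{=:\,f_\theta}.
\]
The term $b_\theta$ is deterministic, while $f_\theta$ is mean-zero over the draw of $\mathcal{V}$. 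Applying the elementary inequality $(b_\theta + f_\theta)^2 \le 2b_\theta^2 + 2f_\theta^2$ reproduces the first term of the claimed bound verbatim, so the entire task reduces to controlling $2 f_\theta^2$ by $C\sum_{i=1}^K \frac{\pi_i^2 w_i^2}{|\mathcal{V}_i|}\Var_i(\theta)$, with high probability and uniformly in $\theta$.

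For a \emph{fixed} $\theta$ I would first pin down the second moment of the fluctuation. Because the validation sets $\mathcal{V}_1,\ldots,\mathcal{V}_K$ are drawn independently and each $\mathcal{V}_i$ consists of i.i.d.\ samples from $\mathcal{D}_i$, the cross-domain covariances vanish and $\Var\bigl(\hat{\mathcal{L}}_{\mathcal{V}_i}(\theta)\bigr) = \Var_i(\theta)/|\mathcal{V}_i|$, so that
\[
\mathbb{E}_{\mathcal{V}}\bigl[f_\theta^2\bigr]
= \sum_{i=1}^K \frac{\pi_i^2 w_i^2}{|\mathcal{V}_i|}\,\Var_i(\theta),
\]
which is exactly the target. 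To turn this identity in expectation into a high-probability statement, I would view $f_\theta$ as a weighted sum $\sum_{i}\sum_{z\in\mathcal{V}_i}\tfrac{\pi_i w_i}{|\mathcal{V}_i|}\ell(\theta,z)$ of independent bounded terms and apply Bernstein's inequality. Boundedness of the loss gives $|f_\theta| \lesssim \sqrt{\mathbb{E}[f_\theta^2]\,\log(1/\delta)} + M\max_i \tfrac{\pi_i w_i}{|\mathcal{V}_i|}\log(1/\delta)$; the \emph{sufficiently large validation set} hypothesis is precisely what forces the linear-in-$1/|\mathcal{V}_i|$ correction to be dominated by the square-root variance term, yielding $f_\theta^2 \le C' \,\mathbb{E}[f_\theta^2]$ with $C'$ absorbing the $\log(1/\delta)$ factor.

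The main obstacle is upgrading this pointwise bound to the uniform-in-$\theta$ guarantee the theorem asserts: a naive union bound is unavailable over a continuous $\Theta$. I would instead invoke a uniform, variance-localized concentration inequality for the loss class $\{z\mapsto \ell(\theta,z):\theta\in\Theta\}$ (a functional Bernstein / Talagrand-type bound), under an implicit complexity assumption---finite covering number or controlled Rademacher complexity---that the \emph{Informal} qualifier leaves unstated. The complexity then enters only through the effective confidence level, replacing $\log(1/\delta)$ by $\log(N/\delta)$ for a suitable cover of $\Theta$, and is folded into the single constant $C$. The delicate point is ensuring that the uniform inequality retains the \emph{per-domain} variance structure $\Var_i(\theta)/|\mathcal{V}_i|$ rather than degrading to a crude $\sup_\theta \Var_i(\theta)$ factor; this is exactly where the large-sample regime is used, since it is the localization around the empirical variance that keeps the bound tight enough to match the stated right-hand side.
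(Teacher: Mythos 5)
Your proposal is essentially correct, and the first half of it coincides with the paper's argument: the bias--fluctuation decomposition obtained by inserting $\pm\,\pi_i w_i \mathcal{L}_i(\theta)$ and the inequality $(b_\theta+f_\theta)^2 \le 2b_\theta^2 + 2f_\theta^2$ are exactly the paper's first two steps. Where you diverge is in controlling $f_\theta^2$. The paper applies Cauchy--Schwarz to split $f_\theta^2 \le K\sum_i \pi_i^2 w_i^2\bigl(\mathcal{L}_i(\theta)-\hat{\mathcal{L}}_{\mathcal{V}_i}(\theta)\bigr)^2$ and then invokes Bennett's inequality separately in each domain, paying a union bound over the $K$ domains; this is why its formal constant carries an explicit factor $16K\ln(KT/(\delta T_0))$. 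You instead compute $\mathbb{E}_{\mathcal{V}}[f_\theta^2]=\sum_i \pi_i^2 w_i^2 \Var_i(\theta)/|\mathcal{V}_i|$ exactly and apply a single Bernstein bound to the whole weighted sum; this is cleaner and avoids the extra factor of $K$, at the cost of needing the large-sample condition to suppress the range term exactly as the paper does (and, as in the paper, that condition is only meaningful when $\Var_i(\theta)$ is bounded away from zero). The one place your sketch is materially heavier than what the paper actually proves is uniformity in $\theta$: the paper never establishes the bound over a continuous parameter class. Its formal version (Theorem \ref{thm:generalization-bound-formal}) asserts the inequality only at the finitely many algorithm iterates $\{\theta_{mT_0}\}$ and handles them by a union bound over $T/T_0$ points, folding the $\ln(T/T_0)$ into the constant. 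Your covering-number/Talagrand route would deliver the stronger ``for all $\theta$'' reading of the informal statement, but it requires a complexity assumption on the loss class that the paper neither states nor needs; recognizing that the informal quantifier is resolved by restricting to the iterates would let you drop that machinery entirely.
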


The main goal is to reduce the bound in \Cref{thm:generalization-bound}. 
In particular, we aim to estimate the optimal weights and update them smoothly 
towards this value. To this end, we minimize the upper bound and apply a single 
step of mirror descent to update the parameters. Assuming $|\mathcal{V}_i| \propto \pi_i$, 
we obtain the following update rule:
\begin{equation}
    w_i^{(t+1)} \propto w_i^{(t)} \exp\left(
      \gamma_1\, \pi_i G(t)\,\mathcal{L}_{i}(\theta_t) 
      - \gamma_2\, \pi_i w_i^{(t)}\,
        \Var_{i}(\theta_t)
    \right),
\end{equation}
where 
\(
    G(t) = \left(\sum_{j=1}^K \pi_j \bigl(1 - w_j^{(t)}\bigr)\, \mathcal{L}_j(\theta_t)\right),
\)
and $\gamma_1, \gamma_2$ are tunable hyperparameters. We adopt the same idea as in One-shot FGLS to estimate the variance and expected loss for each domain using a temporary holdout dataset, and then update the weights accordingly. 
We refer to this update rule as \textbf{ERMA} weighting (\textbf{ERM} \textbf{A}ware Weighting).

One useful feature of this update is that estimating the mean and variance of domain losses is not computationally demanding, which is encouraging for practical use. However, tuning the associated parameters can still be challenging. Moreover, in large-scale pretraining, where data are typically seen only once, the same samples can be used for both training and estimation.

Another notable aspect of this formulation is that $G(t)$ can shed light on the role of low-loss, medium-loss, and high-loss data points in the training process. In particular, there has been extensive work on the effect of pruning data based on their loss values \citep{marion2023less,sow2025dynamic}. However, no general rule has emerged: in some cases, removing high-loss examples improves model performance, while in others it has the opposite effect. Our formulation offers one possible explanation, since $G(t)$ can take both positive and negative values.

\section{Weights for Gradient Estimation}
Gradient estimation is central to stochastic optimization. As shown in \Cref{tab:sgd_convergence_short_nc}, the variance of the gradient estimator directly affects the convergence rate. This variance can differ across domains; intuitively, domains with greater data redundancy tend to exhibit lower gradient variance because their samples are more similar to one another. 

\begin{table}[t]
\centering
\caption{
Convergence rates of SGD under different regimes. 
SC denotes strongly convex, and all results assume $L$-smoothness. 
Here $R = \|\theta_0 - \theta^\star\|$, 
$\sigma^2 = \mathbb{E}\bigl[\|\nabla \ell(\theta, z) - \nabla \hat{\mathcal{L}}(\theta)\|^2\bigr]$ 
is an upper bound on the variance of the stochastic gradients, 
and $\Delta = \hat{\mathcal{L}}(\theta_0) - \hat{\mathcal{L}}^\star$ is the initial suboptimality. 
As can be seen, reducing $\sigma$ improves the convergence rate.
}
\label{tab:sgd_convergence_short_nc}
\renewcommand{\arraystretch}{1.1}
\setlength{\tabcolsep}{3pt}
\small
\begin{tabular}{|c|c|c|}
\hline
\textbf{Setting} & \textbf{Step size} & \textbf{Rate} \\
\hline
Convex & $\eta_t \sim 1/\sqrt{t}$ &
$\mathcal{O}\!\left(\tfrac{L R^2}{T} + \tfrac{\sigma R}{\sqrt{T}}\right)$ \\
\hline
$\mu$-SC & $\eta_t \sim 1/(\mu t)$ &
$\tilde{\mathcal{O}}\!\left(\tfrac{\sigma^2}{\mu T}\right)$ \\
\hline
$\mu$-SC & $\eta = \Theta(1/L)$ &
$\mathcal{O}\!\left((1-\mu/L)^T\right) + \mathcal{O}\!\left(\tfrac{\sigma^2}{\mu L}\right)$ \\
\hline
Non-convex & $\eta_t \sim 1/\sqrt{t}$ &
$\tilde{\mathcal{O}}\!\left(\tfrac{L\Delta}{\sqrt{T}} + \tfrac{\sigma^2}{\sqrt{T}}\right)$ \\
\hline
\end{tabular}
\end{table}

This highlights the importance of domain-specific sampling strategies in order 
to reduce the total variance. Since our approach relies on mixed-domain sampling, 
at each iteration we solve the following optimization problem to minimize the 
variance of the gradient estimate:
\begin{equation}
\begin{aligned}
(b_1^\star,\ldots,b_K^\star) 
&= \arg\min_{\mathbf b}\; 
    \mathbb{E}\Bigl[\,
        \bigl\|g_t - \nabla_{\theta}\hat{\mathcal{L}}_{\mathcal{S}}(\theta_t)\bigr\|^2
    \,\Bigr] \\
\text{s.t. }\;& b_i = \lvert \mathcal{B}_t^i\rvert \;\;\; \forall i \in \{1,\ldots, K\}, 
\qquad \sum_{i=1}^K b_i = B,
\end{aligned}
\end{equation}
where $B$ denotes the total batch size and $\mathcal{B}_t^i$ is the subset of 
samples drawn from domain $i$ at iteration $t$. 

Under the i.i.d.~sampling assumption, the variance decomposes as
\begin{equation}
    \mathbb{E}\Bigl[\,
        \bigl\|g_t - \nabla_{\theta}\hat{\mathcal{L}}_{\mathcal{S}}(\theta_t)\bigr\|^2
    \,\Bigr] 
    = \sum_{i=1}^K \frac{\pi_i^2 w_i^2}{b_i}\, v_i^2,
\end{equation}
with
\begin{equation}
\label{eq:vi-def}
    v_i^2 = \mathbb{E}_{z\sim \mathcal{S}_i}
        \Bigl\|\,\nabla \ell(\theta_t, z) 
        - \nabla_{\theta}\hat{\mathcal{L}}_{\mathcal{S}_i}(\theta_t)\Bigr\|^2
\end{equation}

Applying the method of Lagrange multipliers yields the optimal allocation
\begin{equation}
    -\frac{\pi_i^2 w_i^2 v_i^2}{b_i^2} + \lambda = 0
    \quad \implies \quad 
    b_i \;\propto\; \pi_i w_i v_i.
\end{equation}
This aligns with intuition. If the gradients are similar, the data points within a domain are less informative, so fewer samples are needed from that domain. 

Now that we know the optimal sampling strategy depends on the values of $v_i$, the question is how to estimate them. A straightforward approach would be to use a large batch of data at each step, but this is infeasible as it requires a substantial amount of data at every iteration. Instead, we estimate $v_i$ periodically, for example once every $T_1$ steps. While this provides a practical solution, there remains room for improving these estimation methods, which we leave for future work. 

\Cref{alg:va-sampling} provides an overview of SGD with this sampling method, which we refer to as \textbf{VA} (\textbf{V}ariance \text{A}ware) sampling. The algorithm shown is written for fixed $w_i$, but loss-based reweighting can be easily combined with sampling-based reweighting. We empirically study the effect of using both in the next section.

\begin{algorithm}
\caption{SGD with Variance Aware Sampling}
\begin{algorithmic}[1]
\label{alg:va-sampling}
\REQUIRE Iterations $T$, update interval $T_1$, batch size $B$, learning rate $\eta$, estimation batch size $B_e$
\STATE Initialize $\theta_0 \gets \mathbf{0}$
\FOR{$t = 0$ \TO $T-1$}
        \STATE Sample batch $B_i \subseteq S_i$ with $|B_i| = b_i^{(t)}\cdot B$ for $i \in [k]$
    \STATE $g_t \gets \dfrac{1}{\sum_{i=1}^K w_i \pi_i} 
                 \sum_{i=1}^K \frac{w_i \pi_i}{|B_i|}\sum_{z \in B_i} \nabla_\theta \ell(\theta_t, z)$
    \STATE $\theta_{t+1} \gets \theta_t - \eta g_t$
    \FOR{$i = 1$ \TO $K$}
        \IF{$t \bmod T_1 = T_1 - 1$}
        \STATE Sample batch $B'_i \subseteq S_i$ with $|B'_i| = B_e$
        \STATE Calculate $\hat{v}_i^{(t)}$ the estimate for $v_i$ (\Cref{eq:vi-def})
        \STATE $b_i^{(t+1)} \gets \pi_i w_i \hat{v}_i^{(t)}$
        \ELSE
        \STATE $b_i^{(t+1)} \gets b_i^{(t)}$
    \ENDIF
    \ENDFOR
    \STATE Normalize $b_i^{(t+1)}$
\ENDFOR
\end{algorithmic}
\end{algorithm}

\section{Experiments}
In this section, we empirically investigate the effects of applying loss weights and sampling weights, both individually and in combination. Our goal is to understand how each type of weight contributes to estimation quality and optimization dynamics when domains differ in reliability and variance. 

To this end, we consider two simple but illustrative models: linear regression and logistic regression. Despite their simplicity, these settings provide a controlled environment for analyzing weighting mechanisms without the additional complexity of large-scale architectures. Linear regression offers a direct connection to classical results such as FGLS, while logistic regression allows us to examine the behavior of weights in a non-linear classification setting. 
By comparing results across these experiments, we show that loss weights and sampling weights play complementary roles in improving estimation and optimization. 

Finally, we also examine the effect of using the weights in a setup with a neural network that has a single hidden layer, trained on a modified version of the MNIST dataset \citep{lecun1998mnist}.

\subsection{Linear Regression}
\begin{figure*}[!t]
    \centering
    \includegraphics[width=1\linewidth]{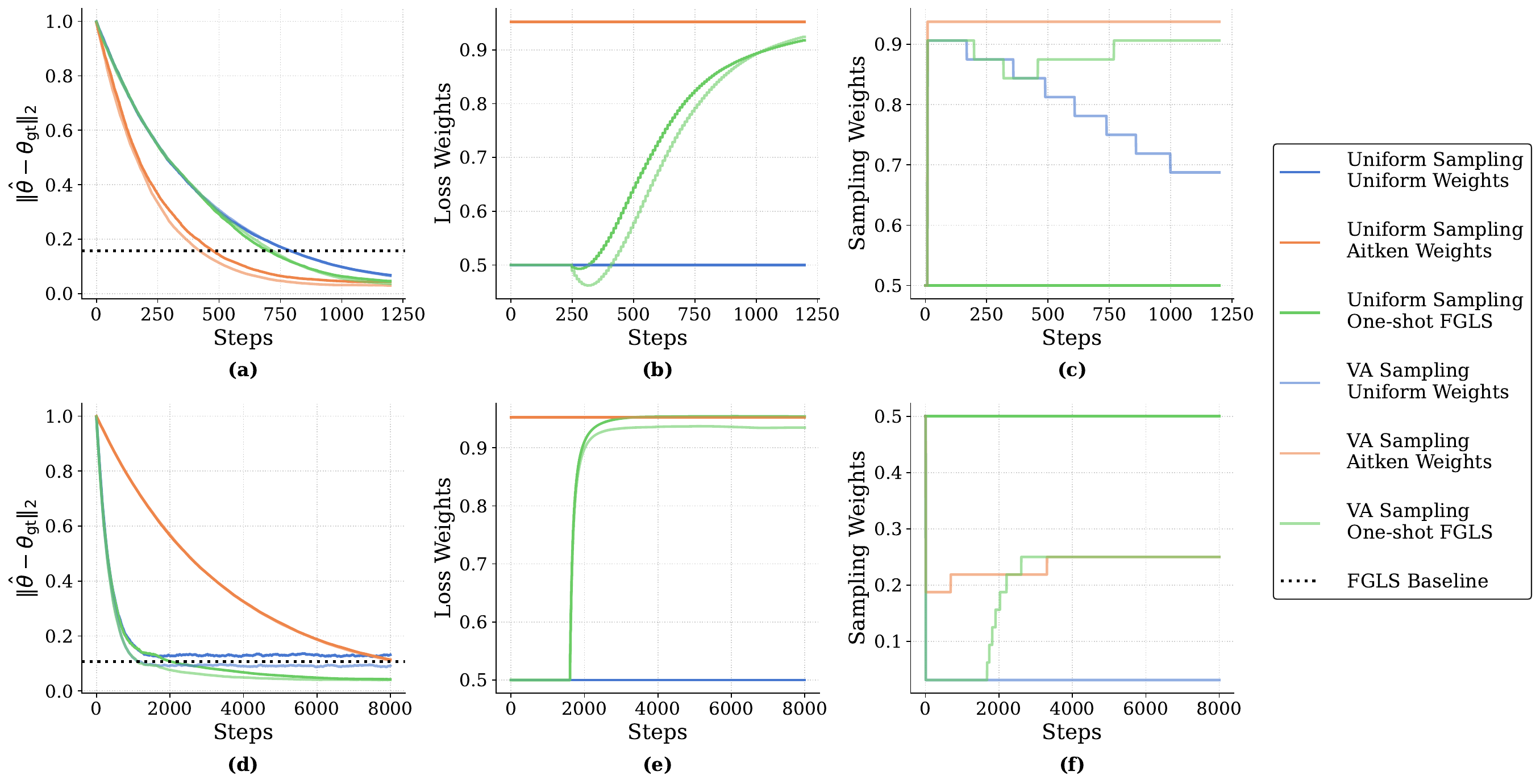}
    \caption{Performance of different methods in the linear regression example. 
        Figures \textbf{a to c} correspond to $(C_1, C_2) = (100, 1)$, while Figures \textbf{d to f} correspond to $(C_1, C_2) = (1, 100)$. 
        \textbf{a, d:} Distance between the estimated parameter and the ground-truth $\theta_{\mathrm{gt}}$ for each method. 
        \textbf{b, e:} Evolution of loss weights for domain one during training. 
        \textbf{c, f:} Evolution of sampling weights for domain one during training.
    }
    \label{fig:linear-regression}
\end{figure*}
\paragraph{Setup}
In the linear regression setting, we consider two data domains, $\mathcal{D}_1$ and $\mathcal{D}_2$. 
Samples in domain $i$ are generated as
\[
x \sim \mathcal{N}(0, C_i \mathbb{I}), 
\qquad 
y = \theta_{\mathrm{gt}}^\top x + \epsilon, 
\qquad 
\epsilon \sim \mathcal{N}(0, \sigma_i^2).
\]
We fix the data dimension to $d = 1000$, and set $\theta_{\mathrm{gt}}$ to be the normalized all-ones vector. We also assume $\pi_1 = \pi_2 = 0.5$
The noise variances are $\sigma_1^2 = 1$ and $\sigma_2^2 = 20$. 
For the scale parameters $C_i$, we consider two configurations: $(C_1, C_2) = (100, 1)$ and $(C_1, C_2) = (1, 100)$. 
This choice allows us to study the interaction between the loss and the sampling weights. 
Intuitively, increasing $C_i$ increases the gradient variance for domain $i$. 
By varying these values, we aim to investigate how the weights behave under different variance conditions. 
(For further discussion, see Appendix.)

We compare six training methods: (i) vanilla SGD, (ii) SGD with variance-aware (VA) sampling, (iii) SGD with optimal ERM weights from Theorem~\ref{thm:aitken}, (iv) SGD with optimal ERM weights and VA sampling, (v) One-shot FGLS (Algorithm~\ref{alg:one-shot-fgls}), and (vi) One-shot FGLS with VA sampling. 
All models are trained with a learning rate of $5 \times 10^{-5}$. 
For One-shot FGLS, we set $\gamma = 1$. 
For weight estimation, instead of splitting the initial dataset with ratio $\rho$ and then adding sampled data to the training set, we use a small subset of approximately 100 training points for all estimations. 
This choice simplifies the procedure and avoids additional complexity. 
Since early updates tend to produce poor and noisy estimations, we start updating the weights only after one-fifth of the total training steps.

\paragraph{Results} 
The results are presented in \Cref{fig:linear-regression}. 
Overall, both VA and One-shot FGLS prove effective, and we even observe additional improvements when combining them in the case $(C_1, C_2) = (1, 100)$. 

In the top row, corresponding to $(C_1, C_2) = (100, 1)$, both VA and OneShot FGLS assign higher sampling probabilities and larger loss weights to domain one. 
This aligns with intuition: domain one is more reliable due to lower label noise and more informative since its data points lie farther from the origin compared to domain two. 
Consequently, both loss and sampling weights emphasize domain one. 
Moreover, in this setting, One-shot FGLS converges to the optimal weights given by \Cref{thm:aitken}.

Turning to the second configuration, $(C_1, C_2) = (1, 100)$, we see a different behavior: VA tends to sample more from domain two, while One-shot FGLS upweights samples from domain one. 
A notable observation here is the suboptimal performance of the Aitken weights. 
We attribute this to the choice of learning rate, as training appears far from convergence under this setting.
\subsection{Logistic Regression}
\begin{figure*}[!t]
    \centering
    \includegraphics[width=1\linewidth]{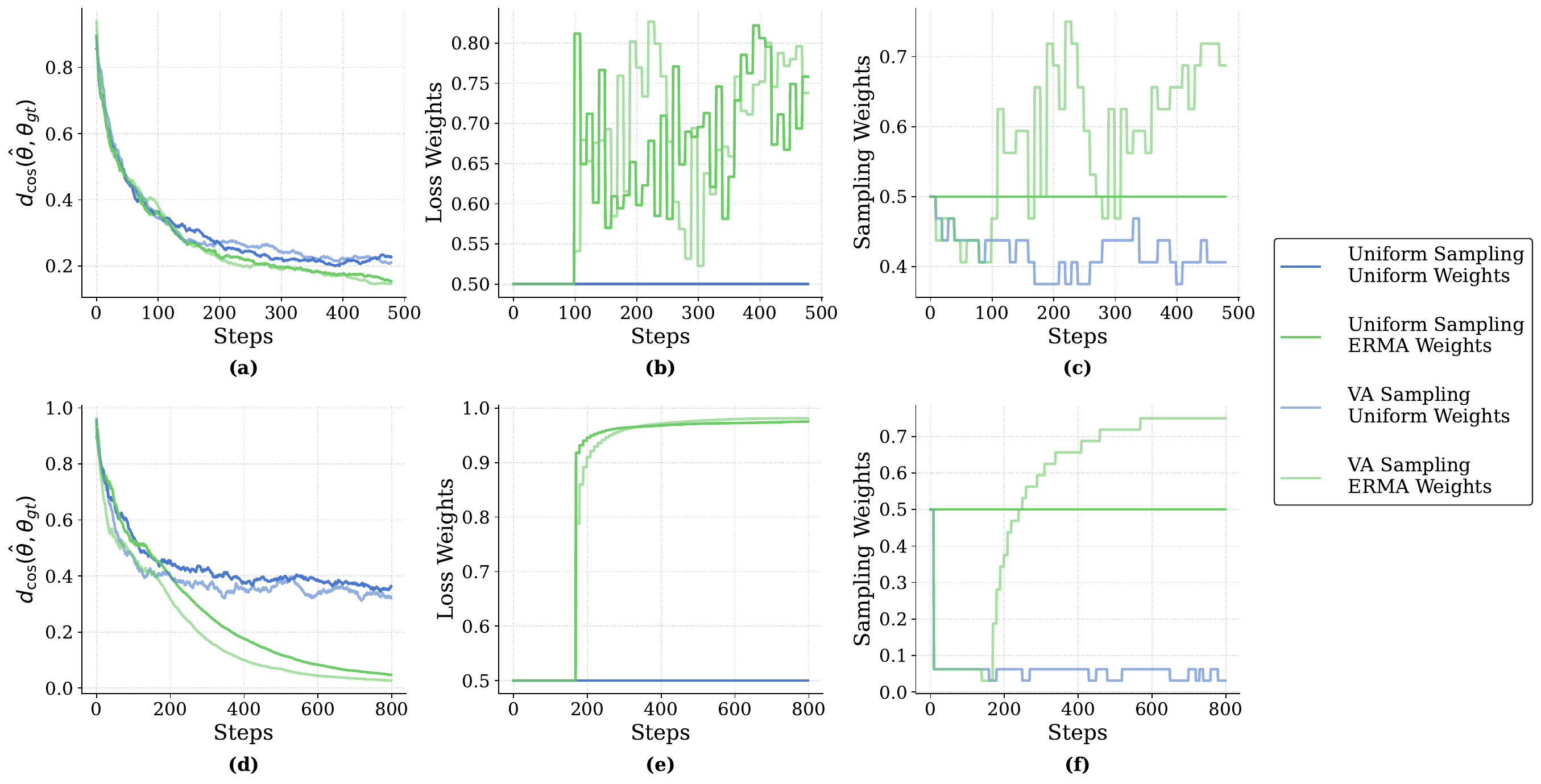}
    \caption{Performance of different methods in the logistic regression example. 
        Figures \textbf{a to c} correspond to $(C_1, C_2) = (100, 100)$, while Figures \textbf{d to f} correspond to $(C_1, C_2) = (10, 100)$. 
        \textbf{a, d:} Cosine distance between the estimated parameter and the ground-truth $\theta_{\mathrm{gt}}$ for each method. 
        \textbf{b, e:} Evolution of loss weights for domain one during training. 
        \textbf{c, f:} Evolution of sampling weights for domain one during training.}
    \label{fig:logistic-regression}
\end{figure*}
\paragraph{Setup}
For the logistic regression experiments, we again consider a two-domain setup. 
Samples in domain $i$ are generated as
\[
x \sim \mathcal{N}(0, C_i \mathbb{I}), 
\qquad 
y \sim \mathrm{Bernoulli}\!\left(\sigma(\theta_{\mathrm{gt}}^\top x)\right),
\]
where $\sigma(\cdot)$ denotes the sigmoid function. 
To incorporate label noise, we flip the generated label with probability $p_i$, i.e.,
\[
\tilde{y} = 
\begin{cases}
y, & \text{with probability } 1 - p_i, \\
1-y, & \text{with probability } p_i,
\end{cases}
\]
where $p_i$ is the flipping probability for domain $i$. 
Similar to the linear regression case, the data dimension is fixed at $1000$, $\theta_{\mathrm{gt}}$ is the normalized all-ones vector, and $\pi_1 = \pi_2 = 0.5$. 
We set $p_1 = 0$ and $p_2 = 0.2$. 
We again investigate two setups for the scale factors: $(C_1, C_2) = (100, 100)$ and $(C_1, C_2) = (10, 100)$. 
We evaluate four methods: (i) vanilla SGD, (ii) SGD with VA sampling, (iii) SGD with ERMA loss reweighting, and (iv) SGD with both VA and ERMA. 
For all methods, we use a learning rate of $10^{-4}$. 
For ERMA, we set $\gamma_1 = 0.01$ and $\gamma_2 = 0.05$. 
Similar to One-shot FGLS, ERMA includes a warm-up stage before estimating the weights.
For evaluation, we use the cosine distance
\begin{equation}
    d_{\cos}(a, b) = 1 - \frac{a^\top b}{\|a\|\|b\|}.
\end{equation}
Further discussion of these experiments is provided in Appendix.

\paragraph{Results}
As shown in \Cref{fig:logistic-regression}, both VA and ERMA improve classifier performance in both setups, with gains reflecting their complementary effects on stability and accuracy.

Starting with $(C_1, C_2) = (100, 100)$, we observe that ERMA places more emphasis on the less noisy domain, i.e., domain one, which is consistent with intuition. 
Interestingly, VA samples more from domain two when uniform weights are used, whereas it shifts to sampling more from domain one when combined with ERMA. 

In the second setup, ERMA outperforms uniform weighting by a large margin. 
Here, ERMA assigns even more weight to the less noisy domain compared to the previous case. 
This can be attributed to the fact that, in this setup, data points from the second domain are both noisier and located farther from the decision boundary, making them less useful for learning the boundary effectively.
\subsection{Neural Net}
\paragraph{Setup} To evaluate different methods, we use the MNIST dataset. Since this dataset does not have a natural notion of domains, we randomly split it into two groups and then randomly flip the labels of one group with a probability of $0.2$. We refer to this group as the noisy group. The same procedure is applied to the test split.

For the model, we use a simple neural network with a single hidden layer of 100 units and ReLU activations. To mimic the training dynamics of large language models, we set the total number of training steps such that each domain is visited at most once. Specifically, we set the total number of steps to $500$, which is sufficiently low to satisfy this condition. This approach allows us to avoid using a separate validation set. In other words, we use each data point before training on it to obtain the required terms for the methods. We also do not use any warm-up steps in these experiments.
\begin{figure*}[!t]
    \centering
    \includegraphics[width=1\linewidth]{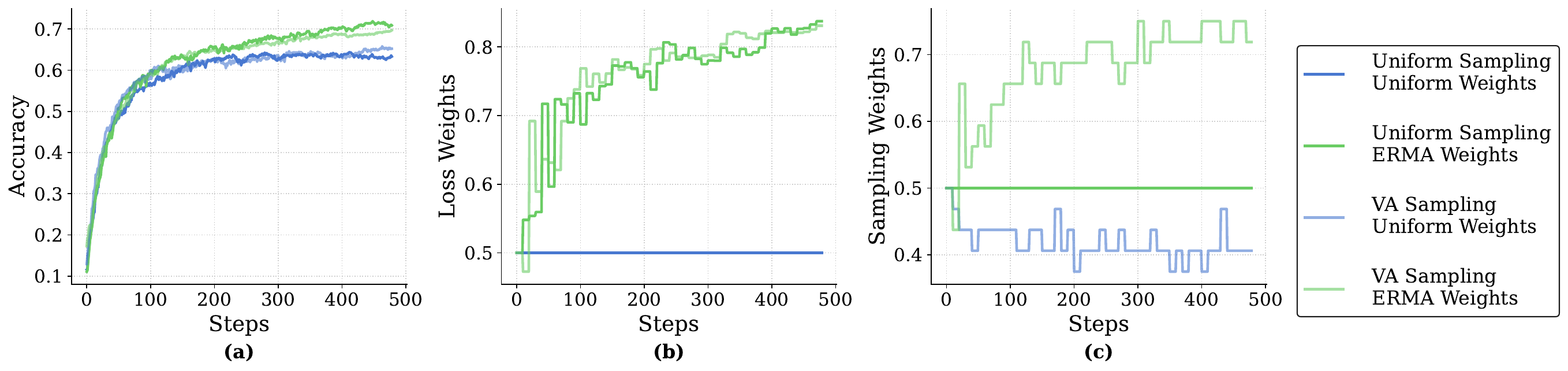}
    \caption{Performance of different methods in the neural net example.}
    \label{fig:mnist}
\end{figure*}

\paragraph{Results} \Cref{fig:mnist} illustrates the performance of each method on the MNIST dataset. In this setup, ERMA achieves the best results, improving upon uniform loss weighting, while VA appears to be ineffective. We attribute this to the high similarity of data inputs in both the clean and noisy groups, which makes the difference in gradient variance insignificant. For instance, without ERMA, VA assigns sampling weights of approximately $0.4$ and $0.6$ to the clean and noisy domains, respectively.
\section{Future Work}
Applying the discussed notion of weights in practice should be the next step. Deduplication is an important data processing step that can improve the performance of trained models. However, performing this process manually by identifying and removing similar data points can be challenging, particularly because defining an appropriate notion of similarity between examples is nontrivial. In this context, we can leverage VA to sample less frequently from domains that are repetitive or duplicative in the training dynamics, since VA naturally assigns lower sampling weights to such domains during training. 
Another interesting direction is to determine the optimal ERM weights that allow us to rely less on noisier domains. However, this choice depends on the structure of the data. If the data points are independent, we can directly apply ERMA. In contrast, this assumption does not hold in the training of autoregressive language models, where samples are inherently dependent. Extending ERMA to handle such cases would therefore be an important avenue for future work.
\section{Conclusion}
We studied the problem of domain weighting in multi-domain learning and showed that the common single-weight approach overlooks two distinct roles: \emph{loss weights}, which control domain contributions to the ERM objective, and \emph{sampling weights}, which regulate variance in stochastic optimization. 
To capture these effects, we proposed algorithms tailored to each: One-shot FGLS for estimating loss weights in linear regression, the ERMA update for adapting them in more general models, and the VA scheme for variance-aware sampling. 
Through experiments on linear and logistic regression, we observed that loss and sampling weights each provide measurable benefits, while their combination yields complementary improvements. 
These findings suggest that domain weighting is inherently two-dimensional rather than one-dimensional. 
This perspective not only provides a clearer theoretical framework for understanding weighting, but also points to promising future directions, such as adaptive procedures that jointly optimize both forms of weights in large-scale training and pretraining pipelines.
\clearpage
\bibliographystyle{plainnat}
\bibliography{arxiv}
\clearpage
\appendix
\thispagestyle{empty}

\onecolumn
\section{Proofs}
\subsection{Proofs of \Cref{thm:asymptotic-norm}}
Before analyzing the algorithm’s asymptotic behavior, we first present a lemma that provides an upper bound on an estimator’s relative suboptimality in terms of its loss weights and the noise variance.

\begin{lemma}
\label{lem:estimator_var_bound}
Let $\mathbf{X} \in \mathbb{R}^{n \times d}$ be the design matrix with rows $\mathbf{x}_1^\top,\ldots,\mathbf{x}_n^\top$, 
and let $\mathbf{y}=(y_1,\ldots,y_n)^\top \in \mathbb{R}^n$ be the labels with independent noise variances $\Var(y_i \mid \mathbf{x}_i)=\sigma_i^2$. 

Let the weighted estimator with optimal weights $\mathbf{W}^\star=\diag(\sigma_1^{-2},\ldots,\sigma_n^{-2})$ be
\[
\hat{\boldsymbol{\theta}}^\star = (\mathbf{X}^\top \mathbf{W}^\star \mathbf{X})^{-1}\mathbf{X}^\top \mathbf{W}^\star \mathbf{y}.
\]
For any WLS estimator with weights $\mathbf{W}=\diag(w_1,\ldots,w_n)$,
\[
\hat{\boldsymbol{\theta}}_{w} = (\mathbf{X}^\top \mathbf{W} \mathbf{X})^{-1}\mathbf{X}^\top \mathbf{W} \mathbf{y},
\]
the variance satisfies
\[
\frac{\tr\!\left(\Var(\hat{\boldsymbol{\theta}}_w)\right)}{\tr\!\left(\Var(\hat{\boldsymbol{\theta}}^\star)\right)}
\;\leq\;
\frac{\max_i w_i\sigma_i^2}{\min_j w_j\sigma_j^2}.
\]
\end{lemma}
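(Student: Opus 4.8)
The plan is to compute both covariance matrices in closed form, reduce to a whitened (unit-variance) model by a change of variables, and then control the numerator and denominator separately using Loewner-order (operator) inequalities together with monotonicity of the trace.

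First I would write down the sandwich form of each covariance. Since $\hat{\boldsymbol{\theta}}_w = A\mathbf{y}$ with $A = (\mathbf{X}^\top \mathbf{W}\mathbf{X})^{-1}\mathbf{X}^\top\mathbf{W}$ and $\Var(\mathbf{y}) = \mathbf{\Sigma} := \diag(\sigma_1^2,\ldots,\sigma_n^2)$ by the independence assumption, the affine covariance rule gives
\[
\Var(\hat{\boldsymbol{\theta}}_w) = (\mathbf{X}^\top\mathbf{W}\mathbf{X})^{-1}\mathbf{X}^\top\mathbf{W}\mathbf{\Sigma}\mathbf{W}\mathbf{X}(\mathbf{X}^\top\mathbf{W}\mathbf{X})^{-1},
\]
while substituting $\mathbf{W} = \mathbf{W}^\star = \mathbf{\Sigma}^{-1}$ collapses the sandwich to the familiar GLS form $\Var(\hat{\boldsymbol{\theta}}^\star) = (\mathbf{X}^\top\mathbf{\Sigma}^{-1}\mathbf{X})^{-1}$. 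Next I would whiten: writing $\tilde{\mathbf{X}} = \mathbf{\Sigma}^{-1/2}\mathbf{X}$ and $\tilde{\mathbf{W}} = \mathbf{\Sigma}^{1/2}\mathbf{W}\mathbf{\Sigma}^{1/2} = \diag(w_1\sigma_1^2,\ldots,w_n\sigma_n^2)$, a direct substitution recasts the two covariances as
\[
\Var(\hat{\boldsymbol{\theta}}_w) = (\tilde{\mathbf{X}}^\top\tilde{\mathbf{W}}\tilde{\mathbf{X}})^{-1}\tilde{\mathbf{X}}^\top\tilde{\mathbf{W}}^2\tilde{\mathbf{X}}(\tilde{\mathbf{X}}^\top\tilde{\mathbf{W}}\tilde{\mathbf{X}})^{-1}, \qquad \Var(\hat{\boldsymbol{\theta}}^\star) = (\tilde{\mathbf{X}}^\top\tilde{\mathbf{X}})^{-1}.
\]
The payoff is that the effective weights are exactly the products $\tilde{w}_i = w_i\sigma_i^2$ that appear in the target ratio, so with $\tilde{w}_{\max} = \max_i \tilde{w}_i$ and $\tilde{w}_{\min} = \min_j \tilde{w}_j$ the claim reduces to bounding the covariance ratio by $\tilde{w}_{\max}/\tilde{w}_{\min}$.

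I would then apply two one-sided operator inequalities. For the numerator, the entrywise bound $\tilde{w}_i^2 \le \tilde{w}_{\max}\tilde{w}_i$ gives $\tilde{\mathbf{W}}^2 \preceq \tilde{w}_{\max}\tilde{\mathbf{W}}$, and congruence invariance of the Loewner order (if $A \preceq B$ then $M^\top A M \preceq M^\top B M$, applied with the symmetric $M = (\tilde{\mathbf{X}}^\top\tilde{\mathbf{W}}\tilde{\mathbf{X}})^{-1}$) yields $\Var(\hat{\boldsymbol{\theta}}_w) \preceq \tilde{w}_{\max}(\tilde{\mathbf{X}}^\top\tilde{\mathbf{W}}\tilde{\mathbf{X}})^{-1}$. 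For the denominator, $\tilde{\mathbf{W}} \succeq \tilde{w}_{\min}\mathbf{I}$ gives $\tilde{\mathbf{X}}^\top\tilde{\mathbf{W}}\tilde{\mathbf{X}} \succeq \tilde{w}_{\min}\tilde{\mathbf{X}}^\top\tilde{\mathbf{X}}$, and inversion reverses the order, so $(\tilde{\mathbf{X}}^\top\tilde{\mathbf{W}}\tilde{\mathbf{X}})^{-1} \preceq \tilde{w}_{\min}^{-1}(\tilde{\mathbf{X}}^\top\tilde{\mathbf{X}})^{-1} = \tilde{w}_{\min}^{-1}\Var(\hat{\boldsymbol{\theta}}^\star)$. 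Chaining the two bounds gives $\Var(\hat{\boldsymbol{\theta}}_w) \preceq (\tilde{w}_{\max}/\tilde{w}_{\min})\Var(\hat{\boldsymbol{\theta}}^\star)$, and taking traces (monotone under the Loewner order, since $A \preceq B$ forces $\tr(B-A)\ge 0$) delivers the stated inequality after dividing through.

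The argument is a chain of standard linear-algebra facts, so I do not expect a deep obstacle; the one place that demands care is the bookkeeping in the whitening step, namely verifying that the sandwich $\mathbf{X}^\top\mathbf{W}\mathbf{\Sigma}\mathbf{W}\mathbf{X}$ becomes $\tilde{\mathbf{X}}^\top\tilde{\mathbf{W}}^2\tilde{\mathbf{X}}$ and that the effective weights are precisely $w_i\sigma_i^2$. Getting that substitution right is exactly what makes the two one-sided operator bounds line up into the clean ratio $\max_i w_i\sigma_i^2 / \min_j w_j\sigma_j^2$; the remaining ingredients (congruence invariance, anti-monotonicity of inversion, and trace monotonicity of the Loewner order) are routine and can be invoked directly.
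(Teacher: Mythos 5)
Your proposal is correct and follows essentially the same route as the paper's proof: your whitened effective weight matrix $\tilde{\mathbf{W}}=\diag(w_1\sigma_1^2,\ldots,w_n\sigma_n^2)$ is exactly the paper's $\boldsymbol{\Lambda}$, and the two key steps, namely $\tilde{\mathbf{W}}^2 \preceq \tilde{w}_{\max}\tilde{\mathbf{W}}$ for the numerator and $\tilde{\mathbf{W}} \succeq \tilde{w}_{\min}\mathbf{I}$ followed by order-reversal under inversion for the denominator, coincide with the paper's argument. The explicit $\mathbf{\Sigma}^{-1/2}$ change of variables is only a presentational repackaging of the paper's factorization $\mathbf{W}=\boldsymbol{\Lambda}\mathbf{W}^\star$.
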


\begin{proof}
Write $\mathbf{W} = \boldsymbol{\Lambda}\,\mathbf{W}^\star$ with $\boldsymbol{\Lambda} = \diag(w_1\sigma_1^2,\ldots,w_n\sigma_n^2)$,  
and let $\boldsymbol{\Omega} = \diag(\sigma_1^2,\ldots,\sigma_n^2) = (\mathbf{W}^\star)^{-1}$.  

Using $\Var(\mathbf{y}\mid \mathbf{X})=\boldsymbol{\Omega}$,
\begin{align*}
\tr\!\left(\Var(\hat{\boldsymbol{\theta}}_w)\right) 
&= \tr\!\left((\mathbf{X}^\top \boldsymbol{\Lambda} \mathbf{W}^\star \mathbf{X})^{-1} 
   \mathbf{X}^\top \boldsymbol{\Lambda} \mathbf{W}^\star \boldsymbol{\Omega}\, \boldsymbol{\Lambda} \mathbf{W}^\star \mathbf{X} 
   (\mathbf{X}^\top \boldsymbol{\Lambda} \mathbf{W}^\star \mathbf{X})^{-1}\right) \\
&= \tr\!\left((\mathbf{X}^\top \boldsymbol{\Lambda} \mathbf{W}^\star \mathbf{X})^{-1} 
   \mathbf{X}^\top \boldsymbol{\Lambda}^2 \mathbf{W}^\star \mathbf{X} 
   (\mathbf{X}^\top \boldsymbol{\Lambda} \mathbf{W}^\star \mathbf{X})^{-1}\right).
\end{align*}
Since $\boldsymbol{\Lambda}\succeq 0$ is diagonal,
\[
\boldsymbol{\Lambda}^2 \;\preceq\; \|\boldsymbol{\Lambda}\|_2\,\boldsymbol{\Lambda},
\qquad \|\boldsymbol{\Lambda}\|_2=\max_i (w_i\sigma_i^2).
\]
With $\mathbf{W}^\star\succeq 0$ diagonal, 
\[
\big(\|\boldsymbol{\Lambda}\|_2\,\boldsymbol{\Lambda}-\boldsymbol{\Lambda}^2\big)\mathbf{W}^\star \;\succeq\; 0,
\]
and by congruence with $\mathbf{X}$,
\[
\mathbf{X}^\top \boldsymbol{\Lambda}^2 \mathbf{W}^\star \mathbf{X} 
\;\preceq\; 
\|\boldsymbol{\Lambda}\|_2\, \mathbf{X}^\top \boldsymbol{\Lambda} \mathbf{W}^\star \mathbf{X}.
\]

Therefore,
\[
\tr\!\left(\Var(\hat{\boldsymbol{\theta}}_w)\right)
\;\le\; \|\boldsymbol{\Lambda}\|_2\; \tr\!\left((\mathbf{X}^\top \boldsymbol{\Lambda} \mathbf{W}^\star \mathbf{X})^{-1}\right)
= \big(\max_i w_i\sigma_i^2\big)\; \tr\!\left((\mathbf{X}^\top \boldsymbol{\Lambda} \mathbf{W}^\star \mathbf{X})^{-1}\right).
\]
Finally, by PSD ordering,
\[
(\min_j w_j\sigma_j^2)\,\mathbf{X}^\top \mathbf{W}^\star \mathbf{X} \;\preceq\; \mathbf{X}^\top \boldsymbol{\Lambda} \mathbf{W}^\star \mathbf{X},
\]
so inversion reverses the order and
\[
\tr\!\left((\mathbf{X}^\top \boldsymbol{\Lambda} \mathbf{W}^\star \mathbf{X})^{-1}\right)
\;\le\; \frac{1}{\min_j w_j\sigma_j^2}\; \tr\!\left((\mathbf{X}^\top \mathbf{W}^\star \mathbf{X})^{-1}\right)
= \frac{1}{\min_j w_j\sigma_j^2}\; \tr\!\left(\Var(\hat{\boldsymbol{\theta}}^\star)\right).
\]
Combining the displays yields the claim.
\end{proof}

Next, we establish a concentration inequality that allows us to bound the weight updates produced by the algorithm at each iteration.

\begin{lemma}
\label{lem:latent-hoeffding}
Consider the latent variable model 
$y = \mathbf{x}^\top \boldsymbol{\theta}_{\mathrm{gt}} + \varepsilon$,
where $\varepsilon$ denotes bounded noise satisfying $|\varepsilon| \le R_\varepsilon$, 
and $\mathbf{x}$ satisfies $\|\mathbf{x}\|_2 \le R_x$.
Let $\mathcal{B}=\{(\mathbf{x}_1,y_1),\ldots,(\mathbf{x}_n,y_n)\}$ be i.i.d.\ samples, and define the squared loss
\[
\ell(\boldsymbol{\theta},(\mathbf{x},y))
=(\mathbf{x}^\top\boldsymbol{\theta}-y)^2
=(\mathbf{x}^\top(\boldsymbol{\theta}-\boldsymbol{\theta}_{\mathrm{gt}})-\varepsilon)^2.
\]
Then for any fixed $\boldsymbol{\theta}\in\mathbb{R}^d$ and $\delta\in(0,1)$,
with probability at least $1-\delta$,
\[
\Bigg|
\frac{1}{n}\sum_{j=1}^n \ell(\boldsymbol{\theta},(\mathbf{x}_j,y_j))
-\Big(
(\boldsymbol{\theta}-\boldsymbol{\theta}_{\mathrm{gt}})^\top
\Sigma_x(\boldsymbol{\theta}-\boldsymbol{\theta}_{\mathrm{gt}})
+\sigma_\varepsilon^2
\Big)
\Bigg|
\le
(R_x\|\boldsymbol{\theta}-\boldsymbol{\theta}_{\mathrm{gt}}\|_2+R_\varepsilon)^2
\sqrt{\frac{\log(2/\delta)}{2n}},
\]
where $\Sigma_x=\mathbb{E}[\mathbf{x}\mathbf{x}^\top]$ and $\sigma_\varepsilon^2=\mathbb{E}[\varepsilon^2]$.
\end{lemma}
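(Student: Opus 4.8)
The plan is to recognize this as a textbook application of Hoeffding's inequality to the i.i.d.\ loss values $Z_j = \ell(\boldsymbol{\theta},(\mathbf{x}_j,y_j))$, once two ingredients are in place: their common expectation and a uniform bound on their range. Writing $\mathbf{u} = \boldsymbol{\theta} - \boldsymbol{\theta}_{\mathrm{gt}}$, I would first expand the loss as $\ell = (\mathbf{x}^\top \mathbf{u} - \varepsilon)^2 = (\mathbf{x}^\top \mathbf{u})^2 - 2(\mathbf{x}^\top \mathbf{u})\varepsilon + \varepsilon^2$ and take expectations term by term. The quadratic term gives $\mathbb{E}[(\mathbf{x}^\top \mathbf{u})^2] = \mathbf{u}^\top \Sigma_x \mathbf{u}$, the last term gives $\mathbb{E}[\varepsilon^2] = \sigma_\varepsilon^2$, and the cross term $-2\,\mathbb{E}[(\mathbf{x}^\top \mathbf{u})\varepsilon]$ vanishes under the standard exogeneity assumption $\mathbb{E}[\varepsilon\,\mathbf{x}] = \mathbf{0}$ (equivalently $\mathbb{E}[\varepsilon \mid \mathbf{x}] = 0$) implicit in the latent model of \Cref{eq:linear_latent}. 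This recovers exactly the centering quantity $\mathbf{u}^\top \Sigma_x \mathbf{u} + \sigma_\varepsilon^2$ that appears inside the absolute value in the statement.

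Next I would bound the range of each $Z_j$. By Cauchy--Schwarz together with the boundedness hypotheses, $|\mathbf{x}^\top \mathbf{u}| \le \|\mathbf{x}\|_2\,\|\mathbf{u}\|_2 \le R_x\,\|\mathbf{u}\|_2$ and $|\varepsilon| \le R_\varepsilon$, so the triangle inequality yields $|\mathbf{x}^\top \mathbf{u} - \varepsilon| \le R_x\,\|\mathbf{u}\|_2 + R_\varepsilon$. Squaring, every $Z_j$ lies in the interval $[0,\,(R_x\,\|\mathbf{u}\|_2 + R_\varepsilon)^2]$, so the range width is $b - a = (R_x\,\|\mathbf{u}\|_2 + R_\varepsilon)^2$. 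This is precisely the squared factor that appears on the right-hand side of the claimed inequality.

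Finally I would invoke Hoeffding's inequality for the empirical mean of $n$ i.i.d.\ variables supported in an interval of width $b - a$, which gives the two-sided tail bound $\mathbb{P}(|\frac{1}{n}\sum_j Z_j - \mathbb{E}[Z]| \ge t) \le 2\exp(-2nt^2/(b-a)^2)$. Setting the right-hand side equal to $\delta$ and solving for $t$ produces $t = (b-a)\sqrt{\log(2/\delta)/(2n)}$, and substituting $b - a = (R_x\,\|\mathbf{u}\|_2 + R_\varepsilon)^2$ matches the stated bound verbatim.

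There is essentially no hard step here; the argument is a mechanical composition of Cauchy--Schwarz and Hoeffding. The only point that genuinely requires care is the vanishing of the cross term, which is where the zero-mean, exogenous-noise structure of the latent model enters—if that assumption is weakened, the centering term acquires an extra $\mathbb{E}[(\mathbf{x}^\top \mathbf{u})\varepsilon]$ contribution and the clean form of the bound breaks down. Everything else follows from the boundedness of $\mathbf{x}$ and $\varepsilon$.
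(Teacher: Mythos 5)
Your proposal is correct and follows essentially the same route as the paper's proof: bound each loss value in $[0,(R_x\|\boldsymbol{\theta}-\boldsymbol{\theta}_{\mathrm{gt}}\|_2+R_\varepsilon)^2]$ via Cauchy--Schwarz and the triangle inequality, apply two-sided Hoeffding, and identify the mean using the zero-mean (exogenous) noise. Your explicit remark that the cross term requires $\mathbb{E}[\varepsilon\,\mathbf{x}]=\mathbf{0}$ rather than merely $\mathbb{E}[\varepsilon]=0$ is a slightly more careful statement of the assumption than the paper makes, but the argument is otherwise identical.
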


\begin{proof}
For any $(\mathbf{x},y)$, since $\|\mathbf{x}\|_2\le R_x$ and $|\varepsilon|\le R_\varepsilon$, we have
\[
|\mathbf{x}^\top(\boldsymbol{\theta}-\boldsymbol{\theta}_{\mathrm{gt}})-\varepsilon|
\le R_x\|\boldsymbol{\theta}-\boldsymbol{\theta}_{\mathrm{gt}}\|_2+R_\varepsilon.
\]
Hence,
\[
0\le \ell(\boldsymbol{\theta},(\mathbf{x},y))
\le (R_x\|\boldsymbol{\theta}-\boldsymbol{\theta}_{\mathrm{gt}}\|_2+R_\varepsilon)^2.
\]
By Hoeffding’s inequality, for i.i.d.\ random variables bounded in $[0,\,U]$ with $U=(R_x\|\boldsymbol{\theta}-\boldsymbol{\theta}_{\mathrm{gt}}\|_2+R_\varepsilon)^2$, we have for any $\epsilon>0$,
\[
\Pr\!\left(
\Bigg|\frac{1}{n}\sum_{j=1}^n \ell(\boldsymbol{\theta},(\mathbf{x}_j,y_j))
-\mathbb{E}[\ell(\boldsymbol{\theta},(\mathbf{x},y))]\Bigg|
\ge \epsilon
\right)
\le 2\exp\!\left(-\frac{2n\epsilon^2}{U^2}\right).
\]
Setting the right-hand side equal to $\delta$ and solving for $\epsilon$ yields, with probability at least $1-\delta$,
\[
\Bigg|\frac{1}{n}\sum_{j=1}^n \ell(\boldsymbol{\theta},(\mathbf{x}_j,y_j))
-\mathbb{E}[\ell(\boldsymbol{\theta},(\mathbf{x},y))]\Bigg|
\le
U\sqrt{\frac{\log(2/\delta)}{2n}}.
\]

Under the latent model $y=\mathbf{x}^\top\boldsymbol{\theta}_{\mathrm{gt}}+\varepsilon$ with $\mathbb{E}[\varepsilon]=0$ and $\mathbb{E}[\varepsilon^2]=\sigma_\varepsilon^2$, we have
\[
\mathbb{E}[\ell(\boldsymbol{\theta},(\mathbf{x},y))]
=(\boldsymbol{\theta}-\boldsymbol{\theta}_{\mathrm{gt}})^\top
\Sigma_x(\boldsymbol{\theta}-\boldsymbol{\theta}_{\mathrm{gt}})
+\sigma_\varepsilon^2.
\]
Substituting this into the bound gives the claimed two-sided inequality.
\end{proof}

Next, we introduce some useful notations. Let $\hat{\boldsymbol{\theta}}_m$ denote the weighted ERM solution corresponding to the updated weights at time step $mT_0 - 1$. For instance, $\hat{\boldsymbol{\theta}}_0$ represents the standard (unweighted) ERM solution. In addition, let $\boldsymbol{\theta}^\star_{\mathcal{S}}$ denote the optimal ERM solution obtained from \Cref{thm:aitken} on the set $\mathcal{S}$. Finally, let $\mathcal{S}_{\text{train}}$ be the subset of data initially sampled at random according to the initial ratio $\rho$ and used for training.

\begin{lemma}\label{lem:one-step-contraction}
Set $\gamma = 1$. 
Assume bounded data, $\|\mathbf{x}\|_2 \le R_x$ and $|\varepsilon| \le R_\varepsilon$, and a hypothesis class with finite diameter
\(
D \coloneqq \sup_{\boldsymbol{\theta} \in \mathcal H} \|\boldsymbol{\theta} - \boldsymbol{\theta}_{\mathrm{gt}}\| < \infty.
\)
Let $\delta \in (0,1)$ and define $\delta' = \delta T_0 / T$.  
Assume $B'_{\min} = \min_i M|\mathcal{S}_i|$ in \Cref{alg:one-shot-fgls} and $\sigma_{\min} = \min_i \sigma_i$.  
Suppose
\[
B'_{\min} \ge 8\!\left(\frac{R_x^2 D^2 + R_\varepsilon^2}{\sigma_{\min}^2}\right)^{\!2}
\log\!\frac{2K}{\delta'},
\]
and that there exists $\Delta_{\mathrm{op}} > 0$ such that
\[
\|\boldsymbol{\theta}_{mT_0} - \hat{\boldsymbol{\theta}}_{m-1}\| \le \Delta_{\mathrm{op}}
\quad \text{for all } 1 \le m \le \tfrac{T}{T_0}.
\]

Define
\[
\Sigma_{\max} = \max_i 
\frac{\big\| \mathbb{E}_{(\mathbf{x}, y) \sim \mathcal{D}_i} [\mathbf{x} \mathbf{x}^\top] \big\|_2}{\sigma_i^2},
\qquad
C_1 \coloneqq \Sigma_{\max}
      + 4\,\frac{R_x^2}{\sigma_{\min}^2}\sqrt{\tfrac{\log(2K/\delta')}{2B'_{\min}}},
\]
\[
C_2 \coloneqq 8\,\frac{R_\varepsilon^2}{\sigma_{\min}^2}\sqrt{\tfrac{\log(2K/\delta')}{2B'_{\min}}},
\qquad
\Delta_{\mathcal S}^\star \coloneqq 
\mathbb{E}\!\left[\|\boldsymbol{\theta}^\star_{\mathcal S} - \boldsymbol{\theta}_{\mathrm{gt}}\|^2\right].
\]

If $4C_1 \Delta_{\mathcal S_{\text{train}}}^\star < 1$, then with probability at least $1 - \delta$, for any 
$1 \le m \le \tfrac{T}{T_0}$,
\[
\mathbb{E}\!\left[\|\hat{\boldsymbol{\theta}}_m - \boldsymbol{\theta}_{\mathrm{gt}}\|^2\right]
\le
\big(4C_1 \Delta_{\mathcal S_{\text{train}}}^\star\big)^m
\mathbb{E}\!\left[\|\hat{\boldsymbol{\theta}}_0 - \boldsymbol{\theta}_{\mathrm{gt}}\|^2\right]
+
\frac{\Delta_{\mathcal S_{\text{train}}}^\star\!\left(1 + 4C_1\Delta_{\mathrm{op}}^2 + C_2\right)}
     {1 - 4C_1 \Delta_{\mathcal S_{\text{train}}}^\star}.
\]
Consequently,
\[
\frac{\mathbb{E}\!\left[\|\hat{\boldsymbol{\theta}}_m - \boldsymbol{\theta}_{\mathrm{gt}}\|^2\right]}
     {\mathbb{E}\!\left[\|\boldsymbol{\theta}^\star_{\mathcal S_{\text{train}}} - \boldsymbol{\theta}_{\mathrm{gt}}\|^2\right]}
\le
4C_1\!\left(4C_1 \Delta_{\mathcal S_{\text{train}}}^\star\right)^{m-1} D^2
+
\frac{1 + 4C_1\Delta_{\mathrm{op}}^2 + C_2}
     {1 - 4C_1 \Delta_{\mathcal S_{\text{train}}}^\star}.
\]
\end{lemma}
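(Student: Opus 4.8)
The plan is to turn the weight update of \Cref{alg:one-shot-fgls} (with $\gamma=1$) into an affine recursion on the mean-squared error $e_m \coloneqq \mathbb{E}[\|\hat{\boldsymbol{\theta}}_m - \boldsymbol{\theta}_{\mathrm{gt}}\|^2]$ and then unroll it. With $\gamma = 1$ the update simply sets $w_i^{(m)} = 1/\hat{L}_i$, where $\hat{L}_i = \frac{1}{|B'_i|}\sum_{z\in B'_i}\ell(\boldsymbol{\theta}_{mT_0}, z)$ is the holdout loss at the current iterate, and $\hat{\boldsymbol{\theta}}_m$ is the weighted least-squares (WLS) fit with these weights. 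The first step is to control each $\hat{L}_i$: applying \Cref{lem:latent-hoeffding} to every domain and taking a union bound over the $K$ domains and the $T/T_0$ update rounds --- which is exactly why $\delta' = \delta T_0/T$ is chosen --- yields a single event of probability at least $1-\delta$ on which $|\hat{L}_i - L_i(\boldsymbol{\theta}_{mT_0})| \le \epsilon_i$ for all $i$ and all $m$, with $L_i(\boldsymbol{\theta}) = (\boldsymbol{\theta}-\boldsymbol{\theta}_{\mathrm{gt}})^\top \Sigma_{x,i}(\boldsymbol{\theta}-\boldsymbol{\theta}_{\mathrm{gt}}) + \sigma_i^2$ and $\epsilon_i \le (2R_x^2\|\Delta\boldsymbol{\theta}\|^2 + 2R_\varepsilon^2)\sqrt{\log(2K/\delta')/(2B'_{\min})}$, where $\Delta\boldsymbol{\theta} = \boldsymbol{\theta}_{mT_0}-\boldsymbol{\theta}_{\mathrm{gt}}$.

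Next I would bound the condition-number-type quantity that drives \Cref{lem:estimator_var_bound}. Writing $u_i \coloneqq \hat{L}_i/\sigma_i^2 = 1/(w_i\sigma_i^2)$, the concentration bound together with $0 \preceq \Sigma_{x,i} \preceq \|\Sigma_{x,i}\|_2 I$ gives two-sided estimates $1 - b \le u_i \le 1 + a$, where $a$ collects the $\Sigma_{\max}\|\Delta\boldsymbol{\theta}\|^2$ term and the positive part of the noise fluctuation, and $b$ collects only the fluctuation. The sample-size hypothesis on $B'_{\min}$ is calibrated precisely so that (using $\|\Delta\boldsymbol{\theta}\|\le D$) one has $b \le \tfrac12$, hence $1/(1-b) \le 1+2b$; multiplying the bounds and absorbing the cross term via $2ab \le a$ then yields the clean estimate
\[
\frac{\max_i w_i\sigma_i^2}{\min_j w_j\sigma_j^2} = \frac{\max_j u_j}{\min_i u_i} \le 1 + 2C_1\|\Delta\boldsymbol{\theta}\|^2 + C_2,
\]
which is exactly where the constants $C_1$ and $C_2$ come from.

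With this in hand, I would invoke \Cref{lem:estimator_var_bound}: conditioning on the weights, $\hat{\boldsymbol{\theta}}_m$ is an unbiased WLS estimator, so its MSE equals $\tr(\Var(\hat{\boldsymbol{\theta}}_m))$, which is at most $\bigl(\max_i w_i\sigma_i^2/\min_j w_j\sigma_j^2\bigr)\,\tr(\Var(\boldsymbol{\theta}^\star_{\mathcal{S}_{\text{train}}})) \le \bigl(1 + 2C_1\|\Delta\boldsymbol{\theta}\|^2 + C_2\bigr)\Delta_{\mathcal{S}_{\text{train}}}^\star$. Propagating the optimization-error assumption through $\|\Delta\boldsymbol{\theta}\|^2 \le 2\Delta_{\mathrm{op}}^2 + 2\|\hat{\boldsymbol{\theta}}_{m-1}-\boldsymbol{\theta}_{\mathrm{gt}}\|^2$ and taking expectations converts this into the affine recursion $e_m \le 4C_1\Delta_{\mathcal{S}_{\text{train}}}^\star\, e_{m-1} + \Delta_{\mathcal{S}_{\text{train}}}^\star(1 + 4C_1\Delta_{\mathrm{op}}^2 + C_2)$. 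Since the hypothesis $4C_1\Delta_{\mathcal{S}_{\text{train}}}^\star < 1$ makes the multiplier a contraction, unrolling and summing the geometric series gives the first displayed bound; the second (``Consequently'') bound then follows by dividing through by $\Delta_{\mathcal{S}_{\text{train}}}^\star$ and using $e_0 = \mathbb{E}[\|\hat{\boldsymbol{\theta}}_0-\boldsymbol{\theta}_{\mathrm{gt}}\|^2]\le D^2$ from the finite diameter, together with the identity $(4C_1\Delta_{\mathcal{S}_{\text{train}}}^\star)^m D^2/\Delta_{\mathcal{S}_{\text{train}}}^\star = 4C_1(4C_1\Delta_{\mathcal{S}_{\text{train}}}^\star)^{m-1}D^2$.

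I expect the main obstacle to be the dependence between the random weights and the data used in the WLS fit. The weights $w^{(m)}$ are functions of $\boldsymbol{\theta}_{mT_0}$, which in turn depends on the training labels (hence on their noise), so the conditional-unbiasedness step underlying $\mathbb{E}[\|\hat{\boldsymbol{\theta}}_m-\boldsymbol{\theta}_{\mathrm{gt}}\|^2] = \mathbb{E}[\tr(\Var(\hat{\boldsymbol{\theta}}_m\mid \text{weights}))]$ is not automatic. Making this rigorous requires exploiting the holdout split --- the weights are estimated on $\mathcal{R}_i = \mathcal{S}_i\setminus \mathcal{S}_i^{\text{train}}$, disjoint from the WLS fit set --- so that, conditioned on the weights, the fit labels retain zero-mean noise. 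Carefully carrying the high-probability event through the expectations, and ensuring the contraction constant stays below $1$ uniformly across rounds, is the delicate bookkeeping on which the rest of the argument rests.
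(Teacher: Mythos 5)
Your proposal follows essentially the same route as the paper's proof: control the holdout loss estimates via \Cref{lem:latent-hoeffding} with a union bound over the $K$ domains and $T/T_0$ rounds (hence $\delta'=\delta T_0/T$), convert this into a two-sided bound on $\sigma_i^2 w_i^{(mT_0)}$, feed the resulting ratio $\max_i w_i\sigma_i^2/\min_j w_j\sigma_j^2 \le 1+2C_1\|\Delta\boldsymbol{\theta}\|^2+C_2$ into \Cref{lem:estimator_var_bound}, split $\|\boldsymbol{\theta}_{mT_0}-\boldsymbol{\theta}_{\mathrm{gt}}\|^2$ via $\Delta_{\mathrm{op}}$, and unroll the resulting affine recursion. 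The minor algebraic differences in how you absorb the fluctuation terms (using $(1+a)/(1-b)$ with $2ab\le a$ rather than the paper's $\zeta_{\max}\le\tfrac12$ simplification) are cosmetic, and your closing remark about the weight--data dependence correctly identifies the role of the holdout split that the paper's proof uses implicitly.
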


\begin{proof}
At iteration $t = mT_0 - 1$, the domain-$i$ weight is
\[
  w_i^{(mT_0)} = \left(\frac{1}{|\mathcal{B}'_i|} \sum_{\mathbf{z} \in \mathcal{B}'_i} \ell(\boldsymbol{\theta}_{mT_0}, \mathbf{z})\right)^{-1}.
\]
By \Cref{lem:latent-hoeffding} and a union bound, with probability at least $1 - \delta'$, for all $i \in [K]$,
\begin{equation}\label{eq:wbounds}
  \frac{1}{1 + \Sigma_{\max}\|\boldsymbol{\theta}_{mT_0} - \boldsymbol{\theta}_{\mathrm{gt}}\|^2 + \zeta_i}
  \;\le\;
  \sigma_i^2 w_i^{(mT_0)}
  \;\le\;
  \frac{1}{1 - \zeta_i},
\end{equation}
where
\[
  \zeta_i \coloneqq \frac{R_{mT_0}}{\sigma_i^2}\sqrt{\frac{\log(2K/\delta')}{2|\mathcal{B}'_i|}},
  \qquad
  R_{mT_0} \coloneqq 2\!\left(R_x^2\|\boldsymbol{\theta}_{mT_0} - \boldsymbol{\theta}_{\mathrm{gt}}\|^2 + R_\varepsilon^2\right).
\]

Combining \eqref{eq:wbounds} with \Cref{lem:estimator_var_bound} yields
\begin{equation}\label{eq:ratio-initial}
  \frac{\mathbb{E}\!\left[\|\hat{\boldsymbol{\theta}}_m - \boldsymbol{\theta}_{\mathrm{gt}}\|^2\right]}
       {\mathbb{E}\!\left[\|\boldsymbol{\theta}^\star_{\mathcal S_{\text{train}}} - \boldsymbol{\theta}_{\mathrm{gt}}\|^2\right]}
  \le
  1 + \frac{\Sigma_{\max}\|\boldsymbol{\theta}_{mT_0} - \boldsymbol{\theta}_{\mathrm{gt}}\|^2 + 2\zeta_{\max}}{1 - \zeta_{\max}},
\end{equation}
where
\[
  \zeta_{\max} \coloneqq \frac{R_{mT_0}}{\sigma_{\min}^2}\sqrt{\frac{\log(2K/\delta')}{2B'_{\min}}}.
\]

Under the batch-size and bounded-diameter assumptions, $\zeta_{\max} \le \tfrac{1}{2}$, so
\begin{align}
\frac{\mathbb{E}\!\left[\|\hat{\boldsymbol{\theta}}_m - \boldsymbol{\theta}_{\mathrm{gt}}\|^2\right]}
     {\mathbb{E}\!\left[\|\boldsymbol{\theta}^\star_{\mathcal S_{\text{train}}} - \boldsymbol{\theta}_{\mathrm{gt}}\|^2\right]}
&\le
1 + 2\!\left(\Sigma_{\max}\|\boldsymbol{\theta}_{mT_0} - \boldsymbol{\theta}_{\mathrm{gt}}\|^2 + 2\zeta_{\max}\right)\notag\\
&=
1 + 2\|\boldsymbol{\theta}_{mT_0} - \boldsymbol{\theta}_{\mathrm{gt}}\|^2
  \!\left(\Sigma_{\max} + 4\,\tfrac{R_x^2}{\sigma_{\min}^2}\sqrt{\tfrac{\log(2K/\delta')}{2B'_{\min}}}\right)
+ 8\,\tfrac{R_\varepsilon^2}{\sigma_{\min}^2}\sqrt{\tfrac{\log(2K/\delta')}{2B'_{\min}}}.
\label{eq:simplified}
\end{align}

By using
\begin{equation}\label{eq:split}
  \|\boldsymbol{\theta}_{mT_0} - \boldsymbol{\theta}_{\mathrm{gt}}\|^2
  \le
  2\|\boldsymbol{\theta}_{mT_0} - \hat{\boldsymbol{\theta}}_{m-1}\|^2
  + 2\|\hat{\boldsymbol{\theta}}_{m-1} - \boldsymbol{\theta}_{\mathrm{gt}}\|^2,
\end{equation}
and $\|\boldsymbol{\theta}_{mT_0} - \hat{\boldsymbol{\theta}}_{m-1}\| \le \Delta_{\mathrm{op}}$, we get
\[
  \|\boldsymbol{\theta}_{mT_0} - \boldsymbol{\theta}_{\mathrm{gt}}\|^2 \le 2\Delta_{\mathrm{op}}^2 + 2\|\hat{\boldsymbol{\theta}}_{m-1} - \boldsymbol{\theta}_{\mathrm{gt}}\|^2.
\]

Substituting into \eqref{eq:simplified}, multiplying by $\Delta_{\mathcal S_{\text{train}}}^\star$, and collecting constants gives
\begin{equation}\label{eq:recursion}
  \mathbb{E}\!\left[\|\hat{\boldsymbol{\theta}}_m - \boldsymbol{\theta}_{\mathrm{gt}}\|^2\right]
  \le
  \Delta_{\mathcal S_{\text{train}}}^\star
  \!\left(1 + 4C_1\Delta_{\mathrm{op}}^2
        + 4C_1\,\mathbb{E}\!\left[\|\hat{\boldsymbol{\theta}}_{m-1} - \boldsymbol{\theta}_{\mathrm{gt}}\|^2\right]
        + C_2\right).
\end{equation}
If $4C_1 \Delta_{\mathcal S_{\text{train}}}^\star < 1$, unrolling \eqref{eq:recursion} and applying a union bound yields, with probability at least $1 - \delta$,
\[
  \mathbb{E}\!\left[\|\hat{\boldsymbol{\theta}}_m - \boldsymbol{\theta}_{\mathrm{gt}}\|^2\right]
  \le
  \big(4C_1\Delta_{\mathcal S_{\text{train}}}^\star\big)^m
  \mathbb{E}\!\left[\|\hat{\boldsymbol{\theta}}_0 - \boldsymbol{\theta}_{\mathrm{gt}}\|^2\right]
  +
  \frac{\Delta_{\mathcal S_{\text{train}}}^\star (1 + 4C_1\Delta_{\mathrm{op}}^2 + C_2)}
       {1 - 4C_1\Delta_{\mathcal S_{\text{train}}}^\star},
\]
completing the proof by using the diameter assumption.
\end{proof}
We now present the main convergence result.

\begin{theorem}[Formal]\label{thm:formal}
Consider the assumptions in \Cref{lem:one-step-contraction}. 
Suppose $\lim_{T_0,\, B \to \infty} \Delta_{\mathrm{op}} = 0$ (as in smooth and convex SGD), and that the ratio $T' = T / T_0$ is fixed. 
Let $\rho = 1 - \tfrac{1}{\sqrt{|\mathcal{S}|}}$, where $|\mathcal{S}|$ denotes the total number of data points. 
Assume there exists a finite constant $C_3 < \infty$ such that
\[
C_3 = \sup
\frac{\mathbb{E}\!\left[\|\hat{\boldsymbol{\theta}}_{T'} - \boldsymbol{\theta}_{\mathrm{gt}}\|^2\right]}
     {\mathbb{E}\!\left[\|\boldsymbol{\theta}^{\star}_{\mathcal{S}_{\mathrm{train}}} - \boldsymbol{\theta}_{\mathrm{gt}}\|^2\right]}.
\]
Then,
\[
\lim_{|\mathcal{S}| \to \infty} 
\frac{\mathbb{E}\!\left[\|\hat{\boldsymbol{\theta}}_{T'} - \boldsymbol{\theta}_{\mathrm{gt}}\|^2\right]}
     {\mathbb{E}\!\left[\|\boldsymbol{\theta}^{\star}_{\mathcal{S}_{\mathrm{train}}} - \boldsymbol{\theta}_{\mathrm{gt}}\|^2\right]} 
= 1.
\]
\end{theorem}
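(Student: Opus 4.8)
The plan is to sandwich the mean-squared-error ratio between a quantity tending to $1$ from above and the constant $1$ from below, and then let $|\mathcal{S}| \to \infty$.

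\textbf{Controlling the constants.} First I would track how every quantity in \Cref{lem:one-step-contraction} behaves under the scaling $\rho = 1 - 1/\sqrt{|\mathcal{S}|}$ with $T' = T/T_0$ held fixed. The training split has size $|\mathcal{S}_{\mathrm{train}}| = \rho|\mathcal{S}| = |\mathcal{S}| - \sqrt{|\mathcal{S}|} \to \infty$, while $M = (1-\rho)T_0/T = 1/(T'\sqrt{|\mathcal{S}|})$ forces the estimation budget $B'_{\min} = \min_i M|\mathcal{S}_i| = \Theta(\sqrt{|\mathcal{S}|}) \to \infty$ (assuming $|\mathcal{S}_i| = \Theta(|\mathcal{S}|)$). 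Choosing the confidence parameter to shrink polynomially, e.g. $\delta = |\mathcal{S}|^{-2}$ with $\delta' = \delta/T'$, keeps $\log(2K/\delta') = O(\log|\mathcal{S}|)$, hence $\sqrt{\log(2K/\delta')/(2B'_{\min})} \to 0$, so $C_1 \to \Sigma_{\max}$ and $C_2 \to 0$. By consistency of the OLS and GLS estimators on $\mathcal{S}_{\mathrm{train}}$, both $\Delta^\star_{\mathcal{S}_{\mathrm{train}}} = \mathbb{E}[\|\boldsymbol{\theta}^\star_{\mathcal{S}_{\mathrm{train}}} - \boldsymbol{\theta}_{\mathrm{gt}}\|^2]$ and $\mathbb{E}[\|\hat{\boldsymbol{\theta}}_0 - \boldsymbol{\theta}_{\mathrm{gt}}\|^2]$ are $\Theta(1/|\mathcal{S}_{\mathrm{train}}|) \to 0$, so $4C_1 \Delta^\star_{\mathcal{S}_{\mathrm{train}}} \to 0$ and the contraction hypothesis $4C_1\Delta^\star_{\mathcal{S}_{\mathrm{train}}} < 1$ holds for all large $|\mathcal{S}|$. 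Finally, coupling the growth of $T_0$ and $B$ to $|\mathcal{S}|$ and invoking $\lim_{T_0,B\to\infty}\Delta_{\mathrm{op}} = 0$ gives $\Delta_{\mathrm{op}} \to 0$.

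\textbf{Upper bound.} Next I would take the non-ratio bound of \Cref{lem:one-step-contraction} at $m = T'$, divide through by $\Delta^\star_{\mathcal{S}_{\mathrm{train}}}$, and pass to the limit. The geometric term becomes $(4C_1)^{T'}(\Delta^\star_{\mathcal{S}_{\mathrm{train}}})^{T'-1}\mathbb{E}[\|\hat{\boldsymbol{\theta}}_0 - \boldsymbol{\theta}_{\mathrm{gt}}\|^2]$: for $T' \ge 2$ it vanishes through $(\Delta^\star_{\mathcal{S}_{\mathrm{train}}})^{T'-1} \to 0$, and for $T' = 1$ through $\mathbb{E}[\|\hat{\boldsymbol{\theta}}_0 - \boldsymbol{\theta}_{\mathrm{gt}}\|^2] \to 0$, while $(4C_1)^{T'}$ stays bounded. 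The remaining term $\tfrac{1 + 4C_1\Delta_{\mathrm{op}}^2 + C_2}{1 - 4C_1\Delta^\star_{\mathcal{S}_{\mathrm{train}}}}$ tends to $\tfrac{1+0+0}{1-0} = 1$. Since \Cref{lem:one-step-contraction} holds only on an event of probability $\ge 1-\delta$, I would split the expectation over this event and its complement; on the complement the diameter bound $\|\hat{\boldsymbol{\theta}}_{T'} - \boldsymbol{\theta}_{\mathrm{gt}}\| \le D$ contributes at most $\delta D^2/\Delta^\star_{\mathcal{S}_{\mathrm{train}}} = O(\delta|\mathcal{S}|) \to 0$ under $\delta = |\mathcal{S}|^{-2}$. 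This yields $\limsup_{|\mathcal{S}|\to\infty}(\text{ratio}) \le 1$, and the finiteness of $C_3$ guarantees the ratio stays uniformly bounded along the sequence so that the limit is well posed.

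\textbf{Lower bound and conclusion.} For the matching lower bound I would appeal to the optimality in \Cref{thm:aitken}: $\boldsymbol{\theta}^\star_{\mathcal{S}_{\mathrm{train}}}$ is the minimum-variance linear unbiased estimator on $\mathcal{S}_{\mathrm{train}}$, so any weighted least-squares estimator with weights independent of the training noise satisfies $\tr(\Var) \ge \tr(\Var(\boldsymbol{\theta}^\star_{\mathcal{S}_{\mathrm{train}}}))$, i.e. ratio $\ge 1$. Since the estimated weights concentrate around the optimal weights $1/\sigma_i^2$ as $B'_{\min} \to \infty$ and $\boldsymbol{\theta}_{mT_0} \to \boldsymbol{\theta}_{\mathrm{gt}}$, the bias introduced by the data dependence of the weights is lower order, giving $\liminf_{|\mathcal{S}|\to\infty}(\text{ratio}) \ge 1$. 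Combining the two bounds yields the claimed limit of $1$.

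\textbf{Main obstacle.} The principal difficulty is reconciling the \emph{high-probability} guarantee of \Cref{lem:one-step-contraction} with the \emph{expected} ratio in the statement: because $\Delta^\star_{\mathcal{S}_{\mathrm{train}}} \to 0$, even a fixed failure probability $\delta$ makes the bad-event contribution $\delta D^2/\Delta^\star_{\mathcal{S}_{\mathrm{train}}}$ diverge, so $\delta$ must be driven to zero jointly with $|\mathcal{S}|$ while still respecting the batch-size condition $B'_{\min} \gtrsim \log(2K/\delta')$. A secondary subtlety is the lower bound: since the weights depend on the training data through the iterate $\boldsymbol{\theta}_{mT_0}$, strict unbiasedness and the clean Aitken comparison hold only in the limit, so this data-dependence bias must be argued to be asymptotically negligible.
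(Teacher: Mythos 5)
Your proposal follows essentially the same route as the paper: unroll \Cref{lem:one-step-contraction} at $m = T'$, send $C_1 \to \Sigma_{\max}$, $C_2 \to 0$, $\Delta^\star_{\mathcal{S}_{\mathrm{train}}} \to 0$ and $\Delta_{\mathrm{op}} \to 0$, and invoke Aitken optimality for the matching lower bound of $1$. The only substantive difference is how the failure event is absorbed into the expected ratio --- you take $\delta = |\mathcal{S}|^{-2}$ and bound the bad-event contribution by $\delta D^2/\Delta^\star_{\mathcal{S}_{\mathrm{train}}}$, whereas the paper takes $\delta = 1/|\mathcal{S}|$ and charges it as $\delta C_3$ via the assumed finiteness of $C_3$ --- and your treatment (including the separate handling of the geometric term when $T'=1$) is, if anything, slightly more careful.
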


\begin{proof}
Set $\delta = 1 / |\mathcal{S}|$. From \Cref{lem:one-step-contraction}, we have
\[
\frac{\mathbb{E}\!\left[\|\hat{\boldsymbol{\theta}}_{T'} - \boldsymbol{\theta}_{\mathrm{gt}}\|^2\right]}
     {\mathbb{E}\!\left[\|\boldsymbol{\theta}^{\star}_{\mathcal{S}_{\mathrm{train}}} - \boldsymbol{\theta}_{\mathrm{gt}}\|^2\right]}
\le
4C_1\!\left(4C_1 \Delta_{\mathcal S_{\text{train}}}^\star\right)^{T'-1} D^2
+
\frac{1 + 4C_1\Delta_{\mathrm{op}}^2 + C_2}{1 - 4C_1 \Delta_{\mathcal S_{\text{train}}}^\star}
+ \delta C_3.
\]
Taking the limit as $|\mathcal{S}| \to \infty$, and using 
\(\lim_{|\mathcal{S}| \to \infty} \Delta^{\star}_{\mathcal{S}_{\mathrm{train}}} = 0\),
\(\lim_{|\mathcal{S}| \to \infty} C_2 = 0\), and 
\(\lim_{|\mathcal{S}| \to \infty} C_1 = \Sigma_{\max}\)
(since \(\lim_{|\mathcal{S}| \to \infty} B'_{\min} = \infty\)),
we obtain
\begin{equation}
\lim_{|\mathcal{S}| \to \infty}
\frac{\mathbb{E}\!\left[\|\hat{\boldsymbol{\theta}}_{T'} - \boldsymbol{\theta}_{\mathrm{gt}}\|^2\right]}
     {\mathbb{E}\!\left[\|\boldsymbol{\theta}^{\star}_{\mathcal{S}_{\mathrm{train}}} - \boldsymbol{\theta}_{\mathrm{gt}}\|^2\right]}
\le
1 + 4\,\Sigma_{\max}\,\Delta_{\mathrm{op}}^2.
\label{eq:asymptotic-bound}
\end{equation}
Moreover, by asymptotic normality,
\[
\lim_{|\mathcal{S}| \to \infty}
\frac{\mathbb{E}\!\left[\|\boldsymbol{\theta}^{\star}_{\mathcal{S}_{\mathrm{train}}} - \boldsymbol{\theta}_{\mathrm{gt}}\|^2\right]}
     {\mathbb{E}\!\left[\|\boldsymbol{\theta}^{\star}_{\mathcal{S}} - \boldsymbol{\theta}_{\mathrm{gt}}\|^2\right]}
= 
\lim_{|\mathcal{S}| \to \infty} \frac{1}{\rho} = 1.
\]
Combining these results gives
\[
1 \le
\lim_{|\mathcal{S}| \to \infty}
\frac{\mathbb{E}\!\left[\|\hat{\boldsymbol{\theta}}_{T'} - \boldsymbol{\theta}_{\mathrm{gt}}\|^2\right]}
     {\mathbb{E}\!\left[\|\boldsymbol{\theta}^{\star}_{\mathcal{S}_{\mathrm{train}}} - \boldsymbol{\theta}_{\mathrm{gt}}\|^2\right]}
\le
1 + 4\,\Sigma_{\max}\,\Delta_{\mathrm{op}}^2.
\]
Finally, since $\Delta_{\mathrm{op}} \to 0$ as $T_0, B \to \infty$, the result follows.
\end{proof}

\subsection{Proofs of \Cref{thm:generalization-bound}}
\begin{theorem}[Formal]
\label{thm:generalization-bound-formal}
Assume the loss function $\ell(\boldsymbol{\theta}, \mathbf{z})$ is bounded by $L_{\max}$ for all $\boldsymbol{\theta}$ and $\mathbf{z}$, and that 
$\Var_i(\boldsymbol{\theta}) \le \Var_{\max}$ for all domains $i \in [K]$.
Let $\{\boldsymbol{\theta}_{mT_0}\}_{m=1}^{T/T_0}$ denote the iterates produced by \Cref{alg:one-shot-fgls}.
Then, for any $\delta \in (0,1)$, with probability at least $1 - \delta$ over the random draw of the 
validation sets $\mathcal{V} = \{\mathcal{V}_1, \ldots, \mathcal{V}_K\}$, the following holds 
simultaneously for all iterates $\boldsymbol{\theta}_{mT_0}$, provided that for every domain $i$,
\[
|\mathcal{V}_i| \;\ge\; \frac{L_{\max}^2\,\ln\!\bigl(KT/(\delta T_0)\bigr)}{18\,\Var_{\max}}.
\]
In that case,
\begin{align}
\bigl(\mathcal{L}_\pi(\boldsymbol{\theta}_{mT_0}) - \hat{\mathcal{L}}_{\mathcal{V}, \pi, w}(\boldsymbol{\theta}_{mT_0})\bigr)^2
&\leq
2\!\left(\sum_{i=1}^{K} \pi_i (1 - w_i)\, \mathcal{L}_i(\boldsymbol{\theta}_{mT_0})\right)^2 \nonumber\\
&\quad + 16K\, \ln\!\left(\frac{KT}{\delta T_0}\right)
\sum_{i=1}^{K} \pi_i^2 w_i^2\, 
\frac{\Var_i(\boldsymbol{\theta}_{mT_0})}{|\mathcal{V}_i|}.
\label{eq:formal-bound}
\end{align}
\end{theorem}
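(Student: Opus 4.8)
The plan is to split the weighted generalization gap into a deterministic \emph{bias} contribution arising from the weights $w_i \neq 1$ and a stochastic \emph{deviation} contribution arising from estimating each $\mathcal{L}_i$ by its validation average, then control the two pieces separately. Writing $\hat{\mathcal{L}}_{\mathcal{V}_i}(\boldsymbol{\theta}) = \tfrac{1}{|\mathcal{V}_i|}\sum_{z\in\mathcal{V}_i}\ell(\boldsymbol{\theta},z)$ and adding and subtracting $\sum_i \pi_i w_i \mathcal{L}_i(\boldsymbol{\theta})$, I would obtain
\[
\mathcal{L}_\pi(\boldsymbol{\theta}) - \hat{\mathcal{L}}_{\mathcal{V},\pi,w}(\boldsymbol{\theta})
= \underbrace{\sum_{i=1}^K \pi_i(1-w_i)\,\mathcal{L}_i(\boldsymbol{\theta})}_{\text{bias}}
+ \underbrace{\sum_{i=1}^K \pi_i w_i\bigl(\mathcal{L}_i(\boldsymbol{\theta}) - \hat{\mathcal{L}}_{\mathcal{V}_i}(\boldsymbol{\theta})\bigr)}_{\text{deviation}}.
\]
Squaring and applying $(a+b)^2 \le 2a^2 + 2b^2$ reproduces the first term of \eqref{eq:formal-bound} exactly, so the whole problem reduces to bounding $2\,(\text{deviation})^2$ by the stated variance sum.

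For the deviation term, the crucial structural point is that the validation set $\mathcal{V}$ is drawn independently of the iterates $\boldsymbol{\theta}_{mT_0}$, which the held-out construction in \Cref{alg:one-shot-fgls} guarantees. Conditioning on the training data freezes each $\boldsymbol{\theta}_{mT_0}$ into a fixed vector, so for each domain $i$ the quantity $\hat{\mathcal{L}}_{\mathcal{V}_i}(\boldsymbol{\theta}_{mT_0})$ is an average of $|\mathcal{V}_i|$ i.i.d.\ bounded variables with mean $\mathcal{L}_i(\boldsymbol{\theta}_{mT_0})$ and variance $\Var_i(\boldsymbol{\theta}_{mT_0})$. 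I would then invoke Bernstein's inequality (the variance-aware refinement of Hoeffding, rather than \Cref{lem:latent-hoeffding}, since here the loss is only assumed bounded by $L_{\max}$), giving for each fixed $\boldsymbol{\theta}_{mT_0}$ and each $i$ a two-sided bound of the form
\[
\bigl|\mathcal{L}_i(\boldsymbol{\theta}_{mT_0}) - \hat{\mathcal{L}}_{\mathcal{V}_i}(\boldsymbol{\theta}_{mT_0})\bigr|
\le \sqrt{\frac{2\,\Var_i(\boldsymbol{\theta}_{mT_0})\,\ln(2/\delta'')}{|\mathcal{V}_i|}}
+ \frac{2L_{\max}\,\ln(2/\delta'')}{3|\mathcal{V}_i|}.
\]
A union bound over the $K$ domains and the $T/T_0$ iterates with $\delta'' = \delta T_0/(KT)$ produces the logarithmic factor $\ln(KT/(\delta T_0))$ and makes the estimate hold simultaneously for all iterates with probability at least $1-\delta$. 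Because we only need the bound on the finite, $\mathcal{V}$-independent set $\{\boldsymbol{\theta}_{mT_0}\}$, this plain union bound suffices and no uniform-convergence or covering-number argument over all of $\Theta$ is required.

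It remains to convert the per-domain deviations into the variance sum. Applying Cauchy--Schwarz (equivalently $(\sum_i a_i)^2 \le K\sum_i a_i^2$) gives $(\text{deviation})^2 \le K\sum_i \pi_i^2 w_i^2\bigl(\mathcal{L}_i(\boldsymbol{\theta}_{mT_0}) - \hat{\mathcal{L}}_{\mathcal{V}_i}(\boldsymbol{\theta}_{mT_0})\bigr)^2$, after which I substitute the squared Bernstein bound. The sample-size hypothesis $|\mathcal{V}_i| \ge L_{\max}^2\ln(KT/(\delta T_0))/(18\,\Var_{\max})$ is what is used to dominate the lower-order $L_{\max}/|\mathcal{V}_i|$ Bernstein term by the leading variance term, so that each $\bigl(\mathcal{L}_i - \hat{\mathcal{L}}_{\mathcal{V}_i}\bigr)^2$ collapses to a constant multiple of $\Var_i(\boldsymbol{\theta}_{mT_0})\ln(KT/(\delta T_0))/|\mathcal{V}_i|$; tracking the factor $2$ from the initial split, the factor $K$ from Cauchy--Schwarz, and the Bernstein constants then yields the coefficient $16K\ln(KT/(\delta T_0))$.

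I expect the main obstacle to be twofold. The first, conceptual, difficulty is the independence argument: one must ensure that $\mathcal{V}$ does not enter the iterates, since otherwise the fixed-$\boldsymbol{\theta}$ concentration fails and a genuine uniform bound would be needed — this is precisely why the algorithm insists that the estimation set behave as a held-out validation set. The second, more mechanical, difficulty is the constant bookkeeping in the absorption step: verifying that the threshold on $|\mathcal{V}_i|$ (with its $1/18$) is strong enough to fold the additive Bernstein term into the $\Var_i/|\mathcal{V}_i|$ contribution and to land on the clean constant $16K$, where one may need the uniform bound $\Var_i \le \Var_{\max}$ to control the lower-order piece.
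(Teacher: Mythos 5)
Your proposal follows essentially the same route as the paper's proof: the identical bias/deviation decomposition, the $(a+b)^2\le 2a^2+2b^2$ step, Cauchy--Schwarz over the $K$ domains, a variance-aware concentration bound applied to each fixed iterate (justified by the held-out validation set), a union bound over the $K$ domains and the $T/T_0$ iterates with confidence level $\delta T_0/(KT)$, and absorption of the lower-order $L_{\max}/|\mathcal{V}_i|$ term via the sample-size condition to arrive at the constant $16K\ln(KT/(\delta T_0))$. The only cosmetic difference is that the paper cites Bennett's inequality where you cite Bernstein's; the displayed bound is the same, and you even flag the same bookkeeping subtlety (that the absorption step really wants $|\mathcal{V}_i|$ large relative to $L_{\max}^2/\Var_i$ rather than $L_{\max}^2/\Var_{\max}$) that the paper's proof passes over.
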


\begin{proof}
The proof proceeds by decomposing the deviation between the population and empirical losses. 
By definition,
\begin{align*}
\mathcal{L}_{\pi}(\boldsymbol{\theta}) - \hat{\mathcal{L}}_{\mathcal{V}, \pi, w}(\boldsymbol{\theta})
&= \sum_{i=1}^{K} \pi_i \!\left(w_i\, \mathcal{L}_i(\boldsymbol{\theta}) - w_i\, \hat{\mathcal{L}}_{\mathcal{V}_i}(\boldsymbol{\theta})\right)
+ \sum_{i=1}^{K} \pi_i (1 - w_i)\, \mathcal{L}_i(\boldsymbol{\theta}).
\end{align*}

Applying the AM--GM inequality, we have
\[
\bigl(\mathcal{L}_{\pi}(\boldsymbol{\theta}) - \hat{\mathcal{L}}_{\mathcal{V}, \pi, w}(\boldsymbol{\theta})\bigr)^2
\leq
2\!\left(\sum_{i=1}^{K} \pi_i (1 - w_i)\, \mathcal{L}_i(\boldsymbol{\theta})\right)^2
+ 2\!\left(\sum_{i=1}^{K} \pi_i w_i\, \bigl(\mathcal{L}_i(\boldsymbol{\theta}) - \hat{\mathcal{L}}_{\mathcal{V}_i}(\boldsymbol{\theta})\bigr)\right)^2.
\]
Next, applying the Cauchy--Schwarz inequality yields
\[
\bigl(\mathcal{L}_{\pi}(\boldsymbol{\theta}) - \hat{\mathcal{L}}_{\mathcal{V}, \pi, w}(\boldsymbol{\theta})\bigr)^2
\leq
2\!\left(\sum_{i=1}^{K} \pi_i (1 - w_i)\, \mathcal{L}_i(\boldsymbol{\theta})\right)^2
+ 2K \sum_{i=1}^{K} \pi_i^2 w_i^2 \bigl(\mathcal{L}_i(\boldsymbol{\theta}) - \hat{\mathcal{L}}_{\mathcal{V}_i}(\boldsymbol{\theta})\bigr)^2.
\]

Under the bounded loss assumption ($\ell(\boldsymbol{\theta}, \mathbf{z}) \le L_{\max}$) and bounded variance assumption ($\Var_i(\boldsymbol{\theta}) \le \Var_{\max}$), Bennett’s inequality implies that, for each domain $i$ and any $\delta_i > 0$, with probability at least $1 - \delta_i$,
\[
\mathcal{L}_i(\boldsymbol{\theta}) - \hat{\mathcal{L}}_{\mathcal{V}_i}(\boldsymbol{\theta})
\le
\sqrt{\frac{2\, \Var_i(\boldsymbol{\theta})\, \ln(1/\delta_i)}{|\mathcal{V}_i|}}
+ \frac{L_{\max}\, \ln(1/\delta_i)}{3|\mathcal{V}_i|}.
\]
Setting $\delta_i = \delta/(KT/T_0)$ and using $a+b \le 2\max\{a,b\}$, we obtain
\[
\mathcal{L}_i(\boldsymbol{\theta}) - \hat{\mathcal{L}}_{\mathcal{V}_i}(\boldsymbol{\theta})
\le
2\sqrt{\frac{2\, \Var_i(\boldsymbol{\theta})\, \ln(KT/(\delta T_0))}{|\mathcal{V}_i|}}.
\]
By $|\mathcal{V}_i| \ge \frac{L_{\max}^2\ln(KT/(\delta T_0))}{18\,\Var_{\max}}$, the second term of the Bennett bound is dominated by the variance term, validating the simplification above.

Taking a union bound over all $K$ domains and over the algorithm’s iterates $\{\boldsymbol{\theta}_{mT_0}\}_{m=1}^{T/T_0}$ ensures that, with probability at least $1 - \delta$, the above inequality holds uniformly for all $\boldsymbol{\theta}_{mT_0}$. 
Substituting this bound into the previous inequality gives
\[
\bigl(\mathcal{L}_{\pi}(\boldsymbol{\theta}_{mT_0}) - \hat{\mathcal{L}}_{\mathcal{V}, \pi, w}(\boldsymbol{\theta}_{mT_0})\bigr)^2
\le
2\!\left(\sum_{i=1}^{K} \pi_i (1 - w_i)\, \mathcal{L}_i(\boldsymbol{\theta}_{mT_0})\right)^2
+ 16K\, \ln\!\left(\frac{KT}{\delta T_0}\right)
\sum_{i=1}^{K} \pi_i^2 w_i^2 \frac{\Var_i(\boldsymbol{\theta}_{mT_0})}{|\mathcal{V}_i|}.
\]
This completes the proof.
\end{proof}
\clearpage
\section{ERMA Derivation}
In this section, we present the detailed derivation of the ERMA formulation.  
Recall that the update rule for mirror descent with the Kullback--Leibler (KL) divergence as its Bregman distance is given by
\[
\mathbf{w}_i^{(t+1)} \gets \mathbf{w}_i^{(t)} 
\exp\!\left(-\eta\, \nabla_{\mathbf{w}_i} f(\mathbf{w}^{(t)}) \right),
\]
where $\eta$ denotes the learning rate.  

We define the upper-bound objective function as
\[
f(\mathbf{w}) = 2\!\left(\sum_{i=1}^K \pi_i (1 - \mathbf{w}_i)\, 
       \mathcal{L}_i(\theta)\right)^{\!2}
       + C \sum_{i=1}^K 
       \frac{\pi_i^2 \mathbf{w}_i^2}{|\mathcal{V}_i|}\, 
       \Var_i(\theta).
\]

Taking the gradient of the upper bound $f(\mathbf{w})$ in \Cref{thm:generalization-bound}, we obtain
\[
\nabla_{\mathbf{w}_i} f(\mathbf{w}) 
= -4\pi_i \!\left(\sum_{j=1}^K \pi_j (1 - \mathbf{w}_j)\, 
       \mathcal{L}_j(\theta)\right) \mathcal{L}_i(\theta)
       + 2C\,\frac{\pi_i \mathbf{w}_i}{|\mathcal{V}|}\, 
       \Var_i(\theta),
\]
where we use $|\mathcal{V}_i| = \pi_i |\mathcal{V}|$.  

Substituting this gradient into the mirror descent update yields
\[
\mathbf{w}_i^{(t+1)} \propto 
\mathbf{w}_i^{(t)} \exp\!\left(
      \gamma_1\, \pi_i G(t)\, \mathcal{L}_i(\theta_t) 
      - \gamma_2\, \pi_i \mathbf{w}_i^{(t)}\,
        \Var_i(\theta_t)
    \right),
\]
where the constants are defined as $\gamma_1 = 4\eta \pi$ and $\gamma_2 = \eta \tfrac{2C}{|\mathcal{V}|}$.

\clearpage
\section{More Experiments}
\paragraph{Linear Regression} The first set of experiments investigates linear regression to isolate the effect of each method. Specifically, we consider two additional setups: (i) $(C_1, C_2) = (100, 1)$ with $(\sigma_1^2, \sigma_2^2) = (1, 1)$, and (ii) $(C_1, C_2) = (1, 1)$ with $(\sigma_1^2, \sigma_2^2) = (1, 20)$; see \Cref{fig:linear-regression-2}. As shown, both VA and One-shot FGLS improve upon the standard (vanilla) training baseline.

\paragraph{Logistic Regression} We also report the test accuracy performance of different models in the logistic regression example shown in \Cref{fig:logistic-regression-2}. As illustrated in the figures, both ERMA and VA improve the accuracy metric, alongside the improvement observed in terms of cosine distance to the ground truth.

\begin{figure*}[!t]
    \centering
    \includegraphics[width=1\linewidth]{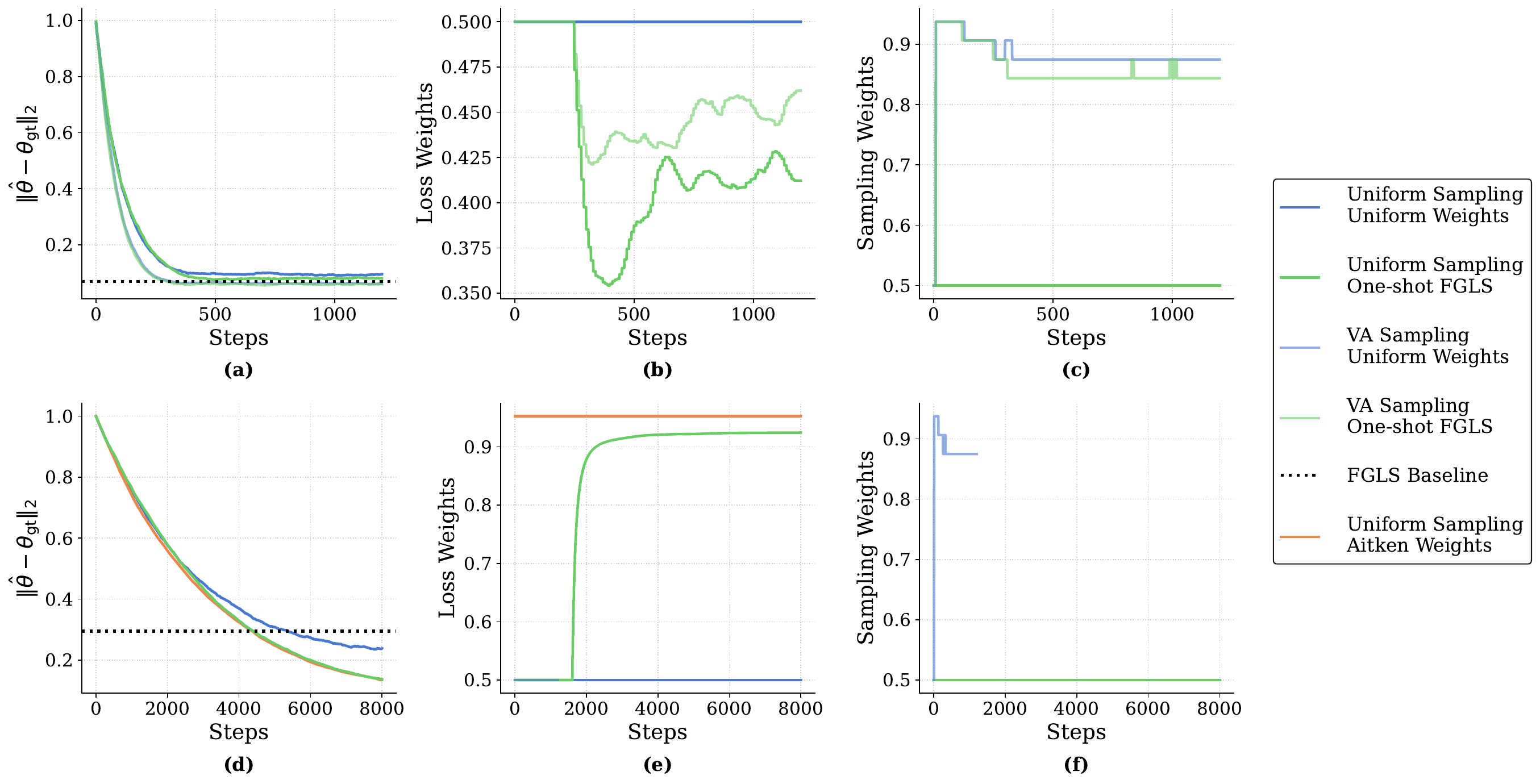}
    \caption{Performance of different methods in the linear regression example. 
        Figures \textbf{a to c} correspond to $(C_1, C_2) = (100, 1)$ and $(\sigma_1^2, \sigma_2^2) = (1,1)$, while Figures \textbf{d to f} correspond to $(C_1, C_2) = (1, 1)$ and $(\sigma_1^2, \sigma_2^2) = (1,20)$. 
        \textbf{a, d:} Distance between the estimated parameter and the ground-truth $\theta_{\mathrm{gt}}$ for each method. 
        \textbf{b, e:} Evolution of loss weights for domain one during training. 
        \textbf{c, f:} Evolution of sampling weights for domain one during training.
    }
    \label{fig:linear-regression-2}
\end{figure*}

\begin{figure*}[!t]
    \centering
    \includegraphics[width=1\linewidth]{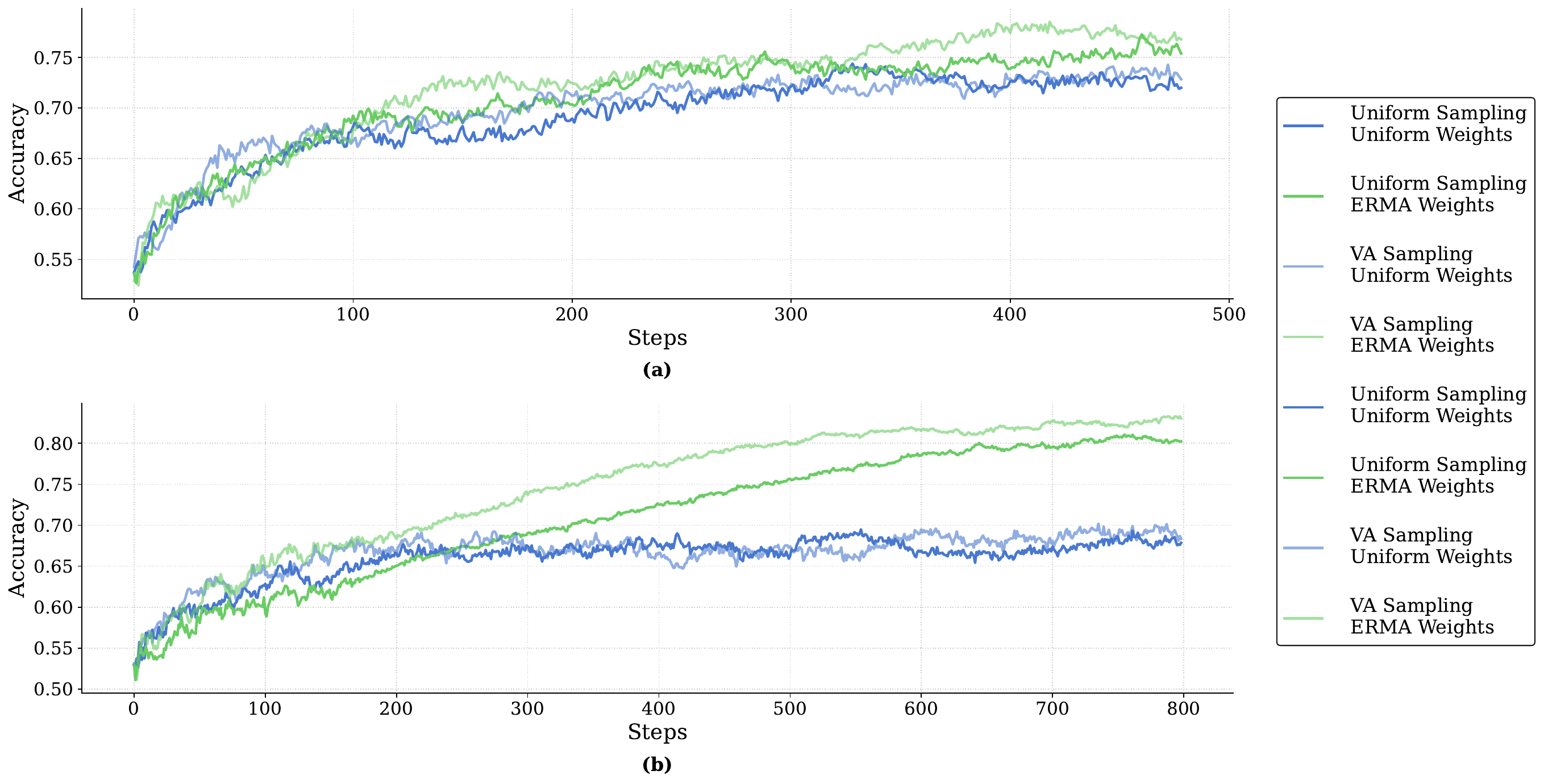}
    \caption{Performance of different methods in the logistic regression example under accuracy.
Figure \textbf{(a)} corresponds to \( (C_1, C_2) = (100, 100) \), while Figure \textbf{(b)} corresponds to \( (C_1, C_2) = (10, 100) \). 
    }
    \label{fig:logistic-regression-2}
\end{figure*}
\section{Using a Single Weight}
In this section, we evaluate the effect of combining VA and ERMA weights into a single set of sampling weights. To achieve this, we multiply the corresponding ERMA and VA weights for each domain and then normalize the resulting values. (We use uniform loss weights for this new algorithm.) \Cref{fig:ablation} shows that this combined approach yields suboptimal results, highlighting the importance of maintaining separate loss and sampling weights.
\begin{figure*}[!t]
    \centering
    \includegraphics[width=1\linewidth]{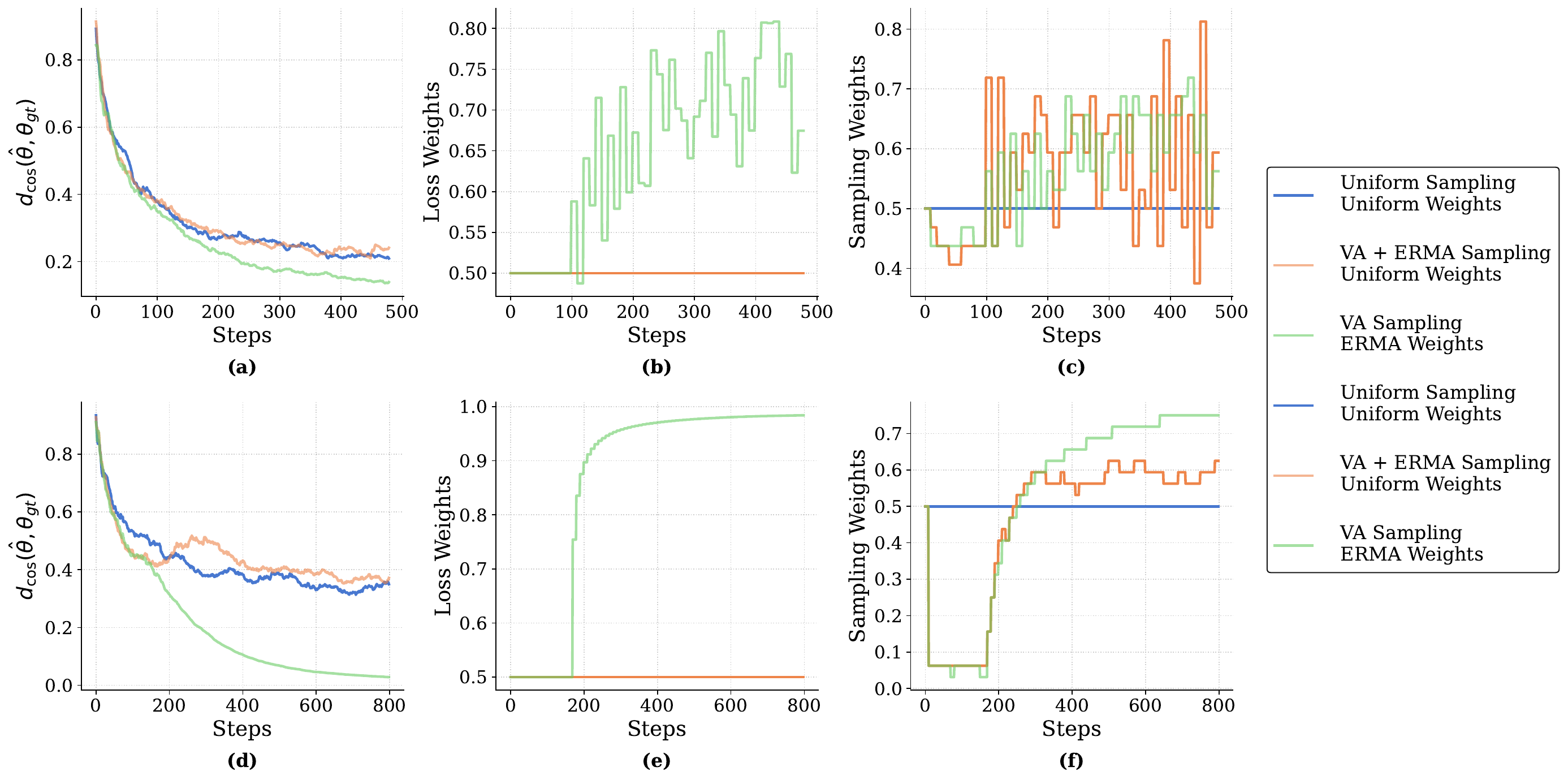}
    \caption{Comparison of vanilla training, training with ERMA loss and VA sampling weights, and a combined approach that merges ERMA and VA into a single set of sampling weights.
    }
    \label{fig:ablation}
\end{figure*}

\clearpage

\end{document}